\newtheorem{theorem}{Theorem}[section]
\newtheorem{lemma}{Lemma}[section]
\newtheorem{corollary}{Corollary}
\title{Brewing Knowledge in Context: Distillation Perspectives on In-Context Learning}
\author{
  Chengye Li\thanks{Also with Key Laboratory of System Software (Chinese Academy of Sciences) and State Key Laboratory of Computer Science (Institute of Software, Chinese Academy of Sciences)} \\
  University of Chinese Academy of Sciences\\
  Beijing, China\\
  \texttt{licy@ios.ac.cn} \\
  \And
  Haiyun Liu\\
  University of South Florida\\
  Tampa, FL\\
  \texttt{haiyunliu@usf.edu}\\
  \And
  Yuanxi Li\\
  University of Illinois Urbana-Champaign\\
  Champaign, IL\\
  \texttt{li.yuanxi@outlook.com} \\
}
\begin{document}

\maketitle

\begin{abstract}
In-context learning (ICL) allows large language models (LLMs) to solve novel tasks without weight updates. Despite its empirical success, the mechanism behind ICL remains poorly understood, limiting our ability to interpret, improve, and reliably apply it. In this paper, we propose a new theoretical perspective that interprets ICL as an implicit form of knowledge distillation (KD), where prompt demonstrations guide the model to form a task-specific reference model during inference. Under this view, we derive a Rademacher complexity–based generalization bound and prove that the bias of the distilled weights grows linearly with the Maximum Mean Discrepancy (MMD) between the prompt and target distributions. This theoretical framework explains several empirical phenomena and unifies prior gradient-based and distributional analyses. To the best of our knowledge, this is the first to formalize inference-time attention as a distillation process, which provides theoretical insights for future prompt engineering and automated demonstration selection.

\end{abstract}

\section{Introduction}

Transformer-based large language models (LLMs) have recently reshaped natural-language processing (NLP) and had a broad impact on many related domains~\cite{brown2020language}.  
Built on the self-attention mechanism~\cite{vaswani2017attention}, models such as GPT series~\cite{radford2018improving}, BERT~\cite{devlin2019bert}, T5~\cite{raffel2020exploring}, and their larger successors~\cite{brown2020language,chowdhery2023palm,touvron2023llama,achiam2023gpt} demonstrate exceptional language modeling capacity and exhibit strong capabilities in cross-task generalization~\cite{radford2019language,bommasani2021opportunities}.
A key enabler of this generalization is in-context learning (ICL)~\cite{min2022metaicl,dong2024survey}, which allows LLMs to perform new tasks simply by conditioning on a few input–label examples, without any weight updates~\cite{liu2023pre,zhou2024mystery}. ICL can eliminate fine-tuning costs, keep examples human-readable, and endow LLMs with meta-learning-like adaptability~\cite{reynolds2021prompt,min2022metaicl}. These benefits have collectively contributed to the widespread adoption of ICL and a surge of research exploring its behavior, underlying principles, and practical applications~\cite{reynolds2021prompt,min2022rethinking,hahn2023theory}. Recent theoretical work frames ICL as learning a hidden predictor in a single forward pass.  
Specifically, some studies show that LLM outputs align with one step of gradient descent—under linear, softmax, and even with gradient-descent meta-optimisers~\cite{akyurek2023what,dai2023can,ren2024towards,von2023transformers}.  
Some works prove that attention can dynamically implement least-squares, Lasso, and related algorithms~\cite{bai2023transformers}, while a Bayesian line interprets ICL as implicit posterior inference and analyses its inductive bias~\cite{xie2022an,panwar2024incontext}.  
These works\footnote{Due to space limitations, more related works are discussed in Appendix~\ref{app:related_works}.} posit a latent model, but they leave open the question of how the corresponding weights emerge during inference.

While existing research has provided valuable insights, it remains largely descriptive and lacks a unified theoretical explanation of ICL’s behavior. Consequently, the underlying mechanism of ICL remains poorly understood~\cite{garg2022can,xie2022an}.

To address this gap, we revisit ICL through the lens of knowledge distillation (KD), a classical framework where a student model is explicitly trained to mimic a larger teacher~\cite{hinton2015distilling, RomeroBKCGB14}.  
We observe that ICL can be regarded as an ``implicit alignment”: when an LLM infers based on a few examples provided in the prompt, it performs a form of \emph{refinement} of its knowledge during the inference process and \emph{maps} this refined knowledge onto an implicit model. In other words, In-Context Learning can potentially be viewed as simulating a rapid distillation process during inference, extracting and aligning the abundant knowledge contained in the pretrained model with the task domain required by the current context.

This paper presents two main theoretical contributions. First, we demonstrate how the inference stage of the ``first prompt input” can be regarded as an implicit KD, condensing the pretrained model parameters into a ``reference model”. Second, within the framework of KD, we characterize how prompt design influences ICL performance.  When the prompt distribution closely aligns with that of the target task, it facilitates better knowledge transfer during inference. In contrast, mismatched prompts result in domain misalignment, degrading model performance due to ineffective distillation. These results establish a theoretical link  between ICL and KD, offering a novel theoretical perspective on how large models transmit knowledge under few-shot conditions, and providing guidance for designing more effective prompt strategies and distillation techniques.

\section{Preliminary}

\subsection{In-Context Learning: Definitions and Setup}
\label{subsec:icl-setup}

In-context learning (ICL) refers to the ability of a pretrained Transformer to perform new tasks by reading a small number of examples (\emph{demonstrations}) within the same input sequence that also contains \emph{queries} to be predicted. Unlike traditional fine-tuning, no parameter updates are performed. Below, we formally define the notation for demonstrations and queries, as well as the overall ICL process.

\paragraph{Demonstration and Query Tokens.}
Consider an input embedding dimension $d$. We represent the demonstration tokens (sometimes also referred to as ``context examples'') as the matrix
\(
  X_D \;=\; \bigl[x_1,\, x_2,\, \dots,\, x_N\bigr]
  \;\in\; \mathbb{R}^{d \times N},
\)
where each $x_i \in \mathbb{R}^d$ is a single demonstration token. Next, we denote the query tokens by
\(
  X_Q \;=\; \bigl[x'_1,\, x'_2,\, \dots,\, x'_M\bigr]
  \;\in\; \mathbb{R}^{d \times M}.
\)
These queries require predictions (e.g., completions or labels) that the model must infer by attending to $X_D$. We concatenate these two parts into a single input sequence
\(
  X \;=\; \bigl[X_D,\; X_Q\bigr]
  \;\in\; \mathbb{R}^{d \times (N + M)}.
\)
The Transformer takes $X$ as input in a forward pass, and the goal is to produce an appropriate output for the columns in $X_Q$, using only the information embedded in $X_D$ and $X_Q$ itself. In other words, any ``learning'' is accomplished purely by the model's internal attention and feed-forward mechanisms without modifying the pre-trained parameters.

\paragraph{ICL Process.}
Formally, in-context learning operates by first combining $N$ demonstration tokens and $M$ queries into a single sequence $X = [X_D, X_Q]$, then processing $X$ through a fixed pretrained Transformer to generate output sequence $H = \mathrm{Transformer}(X) \in \mathbb{R}^{d \times (N + M)}$.

Because no parameter update occurs and the model relies solely on attention to adapt to the $N$ demonstration tokens, it is said to learn ``in context.'' Thus, in-context learning can be regarded as a form of test-time adaptation based on few-shot examples, all embedded within a single input sequence.

\subsection{Connection Between ICL and Gradient Descent}
\label{subsec:icl_and_gd}

In this section, we highlight how in-context learning (ICL) can be viewed as performing a gradient-based update \emph{implicitly} during inference.

\begin{lemma}[Linear Attention as Gradient Descent in \cite{dai2023can}]
\label{lem:linear_attention_gd}
Suppose we have a linear layer 
\(f_L(x) = Wx\) with \(W \in\mathbb{R}^{d_o \times d_i}\), and a training set \(\{(x_i,\, y_i)\}_{i=1}^N\) with \(x_i \in \mathbb{R}^{d_i},\, y_i \in \mathbb{R}^{d_o}\).
Consider a gradient descent update on \(W\) using the predicted outputs \(\hat{y}_i = Wx_i\) and some differentiable loss \(\mathcal{L}(y_i, \hat{y}_i)\).
After a single gradient step with learning rate $\eta$, the updated weight matrix \(\widehat{W}\) can be decomposed into its initialization plus a training time correction as \(\widehat{W}=W_0 + \Delta W\) where \(\Delta W = -\eta\frac{\partial \mathcal{L}}{\partial W}\). Under certain mild conditions on the loss, the forward propagation process of a new test input is \emph{equivalent} to a linear-attention forward pass $\mathrm{LA}(V, K, q)$.
\end{lemma}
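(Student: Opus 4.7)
The plan is to write the single-step gradient update in closed form and then recognise its action on a new test input as a linear-attention readout. First I would assume the loss has the standard per-example form $\mathcal{L} = \sum_{i=1}^N \ell(y_i, \hat{y}_i)$ with $\hat{y}_i = W x_i$, so that the chain rule yields
\[
\frac{\partial \mathcal{L}}{\partial W} \;=\; \sum_{i=1}^N e_i\, x_i^\top,
\qquad e_i \;:=\; \frac{\partial \ell}{\partial \hat{y}_i} \;\in\; \mathbb{R}^{d_o},
\]
so the update $\Delta W = -\eta\sum_i e_i x_i^\top$ is a sum of rank-one outer products indexed by the demonstrations. For squared error this gives the familiar residual $e_i = \hat{y}_i - y_i$; for cross-entropy after a softmax head it gives the ``softmax minus one-hot'' residual; both fit the same template.

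Next I would apply $\widehat{W} = W_0 + \Delta W$ to an arbitrary test input $q \in \mathbb{R}^{d_i}$ and distribute:
\[
\widehat{W}\, q \;=\; W_0\, q \;-\; \eta \sum_{i=1}^N e_i\,(x_i^\top q).
\]
The summation is exactly the unnormalised (linear) attention $\mathrm{LA}(V, K, q) = V K^\top q = \sum_i v_i (k_i^\top q)$ under the identification $K = [x_1, \dots, x_N]$ (keys), $V = -\eta\,[e_1, \dots, e_N]$ (values), with the new test input serving as the query. Hence $\widehat{W}\, q = W_0\, q + \mathrm{LA}(V, K, q)$, so a post-gradient-step forward pass factors into a baseline term (using the initial weights) plus a linear-attention readout over the demonstrations — which is the content of the lemma.

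The main obstacle, and the reason the statement hedges with ``mild conditions on the loss,'' is the requirement that $\partial \ell/\partial W$ actually be a pure outer product $e_i x_i^\top$. This holds whenever the loss depends on $W$ only through the predictions $\hat{y}_i = W x_i$, a property shared by virtually all standard supervised losses, but it would break with explicit $W$-regularisation, whose contribution does not factor through $x_i$. A second piece of bookkeeping is the dimension mismatch: values live in $\mathbb{R}^{d_o}$ while keys and queries live in $\mathbb{R}^{d_i}$, so the correspondence is with a cross-dimensional attention block whose $V$- and $K$-projections — absent in this toy model — would, in a real Transformer, absorb the error signal and the input embedding respectively. Once these structural identifications are in place, the lemma is immediate from the two displayed equations.
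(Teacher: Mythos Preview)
Your proof is correct and follows essentially the same route as the paper: write the gradient via the chain rule as a sum of rank-one outer products $e_i x_i^\top$, apply the updated weights to a test query, and identify the correction term with $\mathrm{LA}(V,K,q)=VK^\top q$. The only cosmetic difference is that the paper absorbs the factor $-\eta$ into the definition of $e_i$ rather than into $V$, and your added remarks on when the outer-product structure holds (loss depending on $W$ only through $\hat y_i$) make explicit what the paper leaves implicit under ``mild conditions.''
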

\begin{proof}
Full proof is provided in Appendix~\ref{app:lem:linear_attention_gd}.
\end{proof}
Therefore, \emph{one gradient update step} on a linear layer $W_0$ yields a test prediction identical to the \emph{linear-attention} forward pass, establishing the dual relationship between linear attention and gradient descent on a linear predictor.

\begin{lemma}[Softmax Attention as Gradient Descent (Theorem 3.1 in \cite{ren2024towards})]
\label{lem:softmax_attention_gd}
Recall that $X = [X_D,\;X_Q] \in \mathbb{R}^{d \times (N + M)}$ be a sequence of $N$ demonstration tokens $X_D \in \mathbb{R}^{d \times N}$ and $M$ query tokens $X_Q \in \mathbb{R}^{d \times M}$. 
Consider a single softmax-attention layer with trained parameters $(W^Q, W^K, W^V)$. For a new query token $x'_{M+1}$, the softmax-attention output
\(
h'_{\mathrm{M+1}}
\;=\;
W^VX\;\mathrm{softmax}\!\Bigl(\tfrac{(W^KX)^\top (W^Q\,x'_{\mathrm{M+1}})}{\sqrt{d}}\Bigr)
\;\in\;\mathbb{R}^d
\)
is equivalent to the test prediction $\hat{y}_{test}$ of the input $\phi(W^Qx'_{M+1})$ after one step of gradient descent on a reference model
\(
  f(x) \;=\; W\,\phi(x),
\)
where $\phi(\cdot)$ is a feature mapping function in the softmax kernel $K_{softmax}(x,y)=e^{x^\top y}$. More precisely, $h'_{\mathrm{M+1}}$ matches $f_{\mathrm{updated}}\!\bigl(W^Q\,x'_{\mathrm{M+1}}\bigr)$ after a single update on a self-supervised loss 
\[
\mathcal{L}=-\frac{1}{\eta D}\sum_{i=1}^N(W^Vx_i)^\top W\phi(W^Kx_i),
\]
where $\eta$ is learning rate and $D'$ is a constant.
\end{lemma}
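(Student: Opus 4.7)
The plan is to mirror the linear‑attention argument in Lemma~\ref{lem:linear_attention_gd} while handling the softmax nonlinearity via a kernel feature map. The guiding identity is the Mercer‑style expansion of the exponential kernel, $K_{\mathrm{softmax}}(a,b) = e^{a^\top b} = \phi(a)^\top \phi(b)$, where $\phi$ is the (infinite‑dimensional) feature map arising from the Taylor series of the exponential. Under this representation, the softmax attention output should be rewritten so that its dependence on $W^Q x'_{M+1}$ appears linearly through $\phi(W^Q x'_{M+1})$, exposing an induced weight matrix that can be matched against the outcome of one gradient step on the reference model $f(x) = W\phi(x)$.

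First I would expand each attention weight using the kernel identity, writing $\exp\!\bigl((W^K x_i)^\top(W^Q x'_{M+1})/\sqrt d\bigr) = \phi(W^K x_i)^\top \phi(W^Q x'_{M+1})$ after absorbing the $1/\sqrt d$ scaling into $\phi$, and then collecting the softmax denominator into a single scalar $D' = \sum_j \exp\!\bigl((W^K x_j)^\top(W^Q x'_{M+1})/\sqrt d\bigr)$, which, for a fixed query, does not depend on the reference weight $W$ and can therefore be treated as a constant. Substituting these expansions into $h'_{M+1}$ and factoring $\phi(W^Q x'_{M+1})$ out on the right yields $h'_{M+1} = \bigl(\tfrac{1}{D'}\sum_{i=1}^{N} (W^V x_i)\,\phi(W^K x_i)^\top\bigr)\,\phi(W^Q x'_{M+1})$, so that the softmax head acts as a linear map $W_{\mathrm{att}}\,\phi(W^Q x'_{M+1})$ in the RKHS with $W_{\mathrm{att}}$ explicitly identified.

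Next I would compute the gradient of the prescribed loss. Since $\mathcal{L} = -\tfrac{1}{\eta D'}\sum_i (W^V x_i)^\top W\,\phi(W^K x_i)$ is linear in $W$, a direct calculation gives $\partial\mathcal{L}/\partial W = -\tfrac{1}{\eta D'}\sum_i (W^V x_i)\,\phi(W^K x_i)^\top$, and a single gradient step from an initialization $W_0$ with learning rate $\eta$ produces $\widehat W = W_0 + \tfrac{1}{D'}\sum_i (W^V x_i)\,\phi(W^K x_i)^\top = W_0 + W_{\mathrm{att}}$. Evaluating the updated reference model at $W^Q x'_{M+1}$ and choosing $W_0$ so that $W_0\,\phi(W^Q x'_{M+1})$ vanishes (the natural choice $W_0 = 0$) then gives exactly the expression obtained in the previous step, i.e.\ $f_{\mathrm{updated}}(W^Q x'_{M+1}) = h'_{M+1}$. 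The restriction of the sum to the $N$ demonstration tokens rather than all $N+M$ positions follows from the standard causal‑masking convention at the query position.

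The main obstacle will be rigorously justifying the treatment of $D'$ as a constant: it is a scalar determined by the query itself, so one must be careful that every derivative with respect to $W$ is evaluated at the fixed test point $W^Q x'_{M+1}$ and that the claimed identity is pointwise rather than a global statement about functions of the input. A secondary difficulty is the infinite‑dimensional feature map: the reference weight $W$ lives in a Hilbert space, so I would need to confirm via the convergence of the exponential Taylor series that the gradient, the updated weight, and its evaluation at $\phi(W^Q x'_{M+1})$ are well‑defined as RKHS operations, with the rank‑$N$ structure of the update ensuring finiteness. Once this bookkeeping is in place, the remaining equivalence collapses to an algebraic identity in direct parallel with the linear‑attention case of Lemma~\ref{lem:linear_attention_gd}.
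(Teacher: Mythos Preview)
Your kernel rewriting and the gradient computation for the linear loss are both fine, and they do match the paper's mechanics. The gap is in the bookkeeping of which tokens enter the attention sum and how the initialization $W_0$ is fixed.

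The attention output $h'_{M+1}=W^V X\,\mathrm{softmax}\bigl((W^KX)^\top W^Q x'_{M+1}/\sqrt d\bigr)$ is explicitly taken over $X=[X_D,X_Q]$, so after the kernel substitution you get
\[
h'_{M+1}=\frac{1}{D'}\sum_{j=1}^{N+M}(W^V x_j)\,\phi(W^K x_j)^\top\,\phi(W^Q x'_{M+1}),
\]
with $N+M$ summands, not $N$. Your appeal to ``standard causal masking'' does not remove the query tokens: the new query at position $N+M+1$ can attend to every earlier position, including the $M$ preceding queries $X_Q$. Dropping those $M$ terms is simply not correct under the stated hypotheses, and with them present your choice $W_0=0$ no longer reproduces $h'_{M+1}$.

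The paper's fix is exactly to absorb those extra terms into the initialization rather than discard them: split the sum as
\[
h'_{M+1}=\underbrace{\tfrac{1}{D'}W^V X_Q\,\phi(W^K X_Q)^\top}_{=:W_0}\,\phi(W^Q x'_{M+1})
+\underbrace{\tfrac{1}{D'}W^V X_D\,\phi(W^K X_D)^\top}_{=-\eta\,\partial\mathcal L/\partial W}\,\phi(W^Q x'_{M+1}),
\]
so that the query-token block supplies $W_0$ and the demonstration block supplies the gradient update. Once you make this identification, the rest of your argument goes through unchanged; the only modification needed in your write-up is to replace ``$W_0=0$ plus causal masking'' by ``$W_0=\tfrac{1}{D'}W^V X_Q\phi(W^K X_Q)^\top$.''
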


\begin{proof}[Sketch of Proof]
\textbf{(1) Split Softmax Attention Operation.}
Recall that for a new query token $x'_{M+1}$, the softmax-attention output $h'_{\mathrm{M+1}}=W^VX\;\mathrm{softmax}\!\Bigl(\tfrac{(W^KX)^\top (W^Q\,x'_{\mathrm{M+1}})}{\sqrt{d}}\Bigr)$. Denote the core part of the softmax attention as
$A=\mathrm{softmax}\!\Bigl(\tfrac{(W^KX)^\top (W^Q\,x'_{\mathrm{M+1}})}{\sqrt{d}}\Bigr)$. Consider the \emph{Softmax} operation as \emph{Column Normalization} of a matrix, then have
\begin{align}
\label{eq:A_u}
A=A_uD^{-1}, \quad \text{where}\quad A_u=\exp\!\Bigl(\tfrac{(W^KX)^\top (W^Q\,x'_{\mathrm{M+1}})}{\sqrt{d}}\Bigr)\quad \text{and} \quad D=\text{diag}(\mathbf{1}_N^\top A_u).
\end{align}

\noindent
\textbf{(2) Kernel Approximation.}
\cite{ren2024towards} introduced an auxiliary kernel function called softmax kernel $K_{softmax}(x,y)=\exp(x^\top y)$. Notice that
\begin{align}
\label{eq:softmax_kernel}
K_{softmax}(x,y)=\exp(x^\top y)=\underbrace{\exp(||x||^2+||y||^2) }_{\text{a positive definite kernel}} \cdot \underbrace{\exp(-||x-y||^2)}_{\text{Gaussian Kernel}}
\end{align}
is also a positive definite kernel. Hence, according to Mercer's theorem~\cite{mercer1909xvi}, there exist some mapping function $\phi(\cdot)\colon \mathbb{R}^d \to \mathbb{R}^{r}$ satisfying $K_{softmax}=\phi(x)^\top \phi(y)$
to project the $\exp(x^\top y)$ into an inner product in a higher-dimensional Hilbert Space. If omitting the $\sqrt{d}$ and combine Eq.~\eqref{eq:A_u} with Eq.~\eqref{eq:softmax_kernel}, then we can have
\(
A_u=\exp\bigl((W^KX)^\top (W^Q\,x'_{\mathrm{M+1}})\bigr)={\phi(W^KX)}^\top\phi(W^Q\,x'_{\mathrm{M+1}}).
\)
Hence, the operations between tokens in softmax attention are equivalent to an inner product in a higher-dimensional space.

\noindent
\textbf{(3) Reference Model and Loss.}
Define a reference model
$f(x) \;=\; W \,\phi(x)$ with a self-supervised loss function 
\(
\mathcal{L}=-\frac{1}{\eta D}\sum_{i=1}^N(W^Vx_i)^\top W\phi(W^Kx_i), 
\)
where $\eta$ is learning rate and $D'$ is a constant. 

\noindent
\textbf{(4) One-Step Gradient Update.} 
Starting from an initialization $W_0$, a single gradient descent step on $\mathcal{L}(W)$ is
\(\widehat{W}=W_0 + \Delta W=W_0-\eta\frac{\partial \mathcal{L}}{\partial W}\).
For an input query token $x'_{M+1}$, let the input of $f$ is $W^Qx'_{M+1}$, then we have
\begin{equation}
\label{eq:hat_y}
\hat{y}_{test} = f(W^Qx'_{M+1}) = \;W_0\phi(W^Qx'_{M+1})+\frac{1}{D}\sum_{i=1}^N(W^Vx_i)^\top\phi(W_Kx_i)\phi(W^Qx'_{M+1})
\end{equation}

\noindent
\textbf{(5) Identifying the Softmax-Attention Output.} The output of a Softmax Attention Layer is $h'_{\mathrm{M+1}}=
W^VX\;\mathrm{softmax}\!\Bigl(\tfrac{(W^KX)^\top (W^Q\,x'_{\mathrm{M+1}})}{\sqrt{d}}\Bigr)$. Let $\widetilde{W}^K=\tfrac{W^K}{\sqrt{d}}$ and $\widetilde{W}^Q=\tfrac{W^Q}{\sqrt{d}}$, we have
\begin{equation}
\begin{aligned}
h'_{M+1} =  W^VX\;\text{softmax}\bigl((\widetilde{W}^KX)^\top\widetilde{W}^Qx'_{M+1}\bigr)=\;W^VX\;\phi(\widetilde{W}^KX)^\top\phi(\widetilde{W}^Qx'_{M+1})\cdot D^{-1},
\end{aligned}
\end{equation}
where $D^{-1}=\text{diag}\bigl(\mathbf{1}_N^\top\;\phi(\widetilde{W}^KX)^\top\phi(\widetilde{W}^Qx'_{M+1})\bigr)$. Consider an ICL process, let $X=[X_D,X_Q]$ and put all coefficient together as $D'$, then we have

\begin{equation}
\begin{aligned}
\label{eq:h'_{M+1}}
h'_{M+1} = \;\frac{1}{D'}W^VX_Q\phi(W^KX_Q)^\top\phi(W^Qx'_{M+1})+\frac{1}{D'}W^VX_D\phi(W^KX_D)^\top\phi(W^Qx'_{M+1}).
\end{aligned}
\end{equation}
Compare Eq.~\eqref{eq:hat_y} and Eq.~\eqref{eq:h'_{M+1}}, if we let $D=D'$ and consider that 
the embeddings of query tokens $X_Q$ as the initial weight $W_0$ and demonstration tokens provide the optimization signal, then we have
\(\hat{y}_{test}=h'_{M+1}\).
Hence, the ICL query output under softmax attention is \emph{exactly} the result of one gradient step in the reference model space, confirming the ICL--GD equivalence for softmax attention. Full proof is provided in Appendix~\ref{app:lem:softmax_attention_gd}.
\end{proof}
However, this derivation treats \emph{query tokens} as part of initializing the reference model's weights while using \emph{demonstration tokens} as a supervisory signal to provide training gradients for the reference model. This framework fails to adequately explain why large language models can generate well-generalized responses for query tokens after receiving only a single demonstration prompt input.

\section{In-Context Learning and Knowledge Distillation}
\label{sec:icl-kd}
In this section, we first formulate the formal definition of knowledge distillation, then establish the alignment relationship between reference models and knowledge distillation within the context of in-context learning. Finally, we propose mechanistic interpretations of how Transformer-based large language models operate during inference processes, analyzed through the lens of knowledge distillation principles.

\begin{corollary}
\label{cor:change}
Let the definitions of the two terms in Lemma \ref{lem:softmax_attention_gd} (Eq.~\eqref{eq:h'_{M+1}}) be interchanged: demonstration tokens are interpreted as providing the model’s initial weights $W_0$
 , while query tokens are treated as supplying gradient information $-\eta\frac{\partial\mathcal{L}}{\partial W}$ to optimize the model.
\end{corollary}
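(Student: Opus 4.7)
The plan is to mirror the derivation of Lemma~\ref{lem:softmax_attention_gd} but to exchange the roles of $X_D$ and $X_Q$ in the final identification step. Because Equation~\eqref{eq:h'_{M+1}} already expresses the softmax-attention output as a \emph{symmetric} sum of two terms---one involving the demonstration tokens and one involving the query tokens---the substantive content of the corollary is a relabeling of which term plays the role of $W_0$ and which plays the role of the gradient correction $-\eta\,\partial\mathcal{L}/\partial W$. All kernel machinery (the factorization $K_{\mathrm{softmax}}(x,y)=\phi(x)^\top\phi(y)$, Mercer's theorem, and the column-normalization identity $A=A_uD^{-1}$) can be imported verbatim from the proof of Lemma~\ref{lem:softmax_attention_gd}.

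First, I would redefine the initial weights of the reference model as
\[
W_0 \;=\; \frac{1}{D'}\,W^V X_D\,\phi(W^K X_D)^\top,
\]
so that $W_0\,\phi(W^Q x'_{M+1})$ reproduces the demonstration-indexed term in Equation~\eqref{eq:h'_{M+1}}. Next, I would replace the self-supervised loss of Lemma~\ref{lem:softmax_attention_gd} by its query-driven analogue,
\[
\widetilde{\mathcal{L}}(W) \;=\; -\,\frac{1}{\eta D'}\sum_{j=1}^{M}\bigl(W^V x'_j\bigr)^{\!\top} W\,\phi(W^K x'_j),
\]
whose gradient satisfies $-\eta\,\partial\widetilde{\mathcal{L}}/\partial W \;=\; \frac{1}{D'}\,W^V X_Q\,\phi(W^K X_Q)^\top$. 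A single gradient-descent step from $W_0$ therefore produces
\[
\widehat{W} \;=\; \frac{1}{D'}\Bigl(W^V X_D\,\phi(W^K X_D)^\top \;+\; W^V X_Q\,\phi(W^K X_Q)^\top\Bigr),
\]
and evaluating $\widehat{W}\,\phi(W^Q x'_{M+1})$ reconstructs the right-hand side of Equation~\eqref{eq:h'_{M+1}}, i.e.\ the attention output $h'_{M+1}$.

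Because the argument reduces to a relabeling of already-identified terms, the technical steps are essentially free; the main obstacle is conceptual rather than computational. One must justify that reading the demonstration tokens as \emph{weights} (rather than as a supervisory signal) is internally consistent with a gradient-descent narrative---in particular, that $W_0$ so defined is a legitimate choice of initialization in the reference-model space spanned by $\phi$. This is precisely the viewpoint required for the knowledge-distillation interpretation developed in Section~\ref{sec:icl-kd}: the demonstrations seed an implicit ``teacher'' reference model whose parameters are then refined by the query tokens during inference, which in turn explains how a single demonstration prompt can yield well-generalized query predictions---the phenomenon left unaccounted for by the original reading.
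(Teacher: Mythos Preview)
Your proposal is correct and follows the same approach as the paper: the paper provides no separate proof for this corollary, treating it as an immediate relabeling justified by the symmetry of Eq.~\eqref{eq:h'_{M+1}}, and simply restates the attention output with the underbraces swapped (Eq.~\eqref{new_hm+1}). Your explicit construction of the query-driven loss $\widetilde{\mathcal{L}}$ is slightly more detailed than what the paper writes, but it is exactly the implicit content behind the paper's one-line reinterpretation.
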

Based on Corollary \ref{cor:change}, we define the output of a Softmax Attention Layer as
\begin{align}
\label{new_hm+1}
h'_{M+1} = \underbrace{\frac{1}{D'}W^VX_D\phi(W^KX_D)^\top}_{W_0}\phi(W^Qx'_{M+1})+\underbrace{\frac{1}{D'}W^VX_Q\phi(W^KX_Q)^\top}_{-\eta\frac{\partial \mathcal{L}}{\partial W}}\phi(W^Qx'_{M+1}).
\end{align}
This allows us to argue that demonstration tokens \emph{implicitly initialize} a reference model (whose weight space may be infinite-dimensional). When the large language model encounters a new query token $x'_{M+1}$, the preceding query tokens $X_Q=\bigl[x'_1,\, x'_2,\, \dots,\, x'_M\bigr]$ function to fine-tune the model weights through gradient-based optimization. This mechanism thereby ensures that the LLM’s output aligns more closely with the prompt’s requirements.

\subsection{Knowledge Distillation} 
Knowledge distillation is a model compression technique that transfers knowledge from a pre-trained high-capacity \emph{teacher model} to a lightweight \emph{student model}. Instead of relying solely on ground-truth labels, the student learns from the teacher's softened probability outputs (called \emph{soft targets}), which contain richer dark knowledge \cite{hinton2015distilling}.

Let $\mathcal{T}\colon\mathcal{X}\!\to\!\mathbb{R}^{d}$ be a \emph{teacher} model that has been fully pre‑trained before \emph{any} tokens of the current task are observed, and let $\mathcal{S}(\,\cdot\,;W)\colon\mathcal{X}\!\to\!\mathbb{R}^{d}$ be a \emph{student} model with trainable parameters $W\!\in\!\mathbb{R}^{d\times m}$.  
Given a \emph{distillation set} $\mathcal{D}\!=\!\{x_i\}_{i=1}^{N}$ sampled from an input distribution $P_{\!X}$, KD seeks the parameter matrix
\[
  W^{\star}\;=\;\mathop{\arg\min}\limits_{W}\;
  \mathcal{L}_{\mathrm{KD}}(W),\qquad
  \mathcal{L}_{\mathrm{KD}}(W)
  \;=\;
  \mathbb{E}_{x\sim P_{X}}
  \Bigl[
    \operatorname{Dist}\bigl(\mathcal{T}(x),\mathcal{S}(x;W)\bigr)
  \Bigr],
\]
where $\operatorname{Dist}(\cdot,\cdot)$ is a task‑dependent dissimilarity measure, $\mathcal{T}(x)$ is teacher signal and $\mathcal{S}(x;W)$ is student prediction.  
Minimizing $\mathcal{L}_{\mathrm{KD}}$ transfers the knowledge encoded in the fixed teacher parameters into the student weights $W^{\star}$, yielding a compact model whose behavior on $\mathcal{D}$ (and, by generalization, on unseen inputs) approximates that of the teacher.

\subsection{Weight Initialization as Distillation}

Building upon Eq.~\eqref{new_hm+1}, we reveal that the initial weight matrix $W_0$ of the reference model exhibits intrinsic connections with the primitive input embeddings of Large Language Models (LLMs).  This observation naturally leads us to consider the conceptual framework of knowledge distillation (KD).  We propose to align the initialization protocol of $W_0$ with KD principles, thereby providing more granular theoretical insights into the operational dynamics of the reference model.

\begin{theorem}[Reference Model Initialization as Knowledge Distillation with $l_2$ Distance]
\label{the:w0_and_kd}
Under the framework of in-context learning with softmax-attention-based Transformers, consider a reference model $f(x)=W\phi(x)$ where $\phi(\cdot)$ denotes the feature kernel mapping. The weight initialization of $W$ is $W_{init}$, formally equivalent to a knowledge distillation process, wherein the demonstration inputs $X_D$ implicitly induce the pre-trained LLM to distill knowledge derived from its parameter $\theta_{LLM}$ into $W_{init}$. In this KD process, the teacher model $f_T(x;\theta_{LLM})$ corresponds to the pre-trained LLM with frozen parameters. The student model $f_S(x; W)=W\phi(x)$ represents the reference model with learnable weights. The distillation loss $\mathcal{L}_{\text{KD}}(W)$ measures output similarity via:
\begin{equation}
\label{eq:theorem_ICL-KD_loss}
\mathcal{L}_{\text{KD}}(W) = \mathbb{E}_{x \sim \mathcal{D}} \bigl[||f_T(x;\theta_{LLM}) - f_S(x;W)||^2\bigr]
\end{equation}
\end{theorem}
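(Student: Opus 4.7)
The plan is to identify the closed-form minimizer of $\mathcal{L}_{\text{KD}}(W)$ with the explicit expression for $W_0$ already isolated in Eq.~\eqref{new_hm+1}, namely $W_{init} = \tfrac{1}{D'}\,W^V X_D\,\phi(W^K X_D)^\top$. Concretely, I would take the distillation set $\mathcal{D}$ to be the empirical distribution over the demonstration tokens $\{x_i\}_{i=1}^{N}$ encoded in $X_D$, define the frozen teacher output as the value projection $f_T(x_i;\theta_{LLM}) = W^V x_i$ (what the pretrained LLM surfaces for each token through its trained value head), and keep the student $f_S(x_i; W) = W\phi(W^K x_i)$ so that teacher and student live over the same RKHS feature map used in Lemma~\ref{lem:softmax_attention_gd}.

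The first step is to write the empirical KD loss $\mathcal{L}_{\text{KD}}(W) = \tfrac{1}{N}\sum_{i=1}^{N} || W^V x_i - W\phi(W^K x_i) ||^2$, expand the square, and differentiate with respect to $W$. Setting $\nabla_W \mathcal{L}_{\text{KD}}(W^\star) = 0$ yields the normal equation $W^\star \Phi \Phi^\top = W^V X_D \Phi^\top$ with $\Phi := \phi(W^K X_D)$. The second step is to invoke the kernel normalization that already appeared in the derivation of Eq.~\eqref{new_hm+1}: the factor $D^{-1}$ arising from column-wise softmax normalization plays precisely the role of the inverse Gram factor. Absorbing $(\Phi\Phi^\top)^{-1}$ into the constant $D'$ (as $D'$ was originally defined in the sketch of Lemma~\ref{lem:softmax_attention_gd}), one obtains $W^\star = \tfrac{1}{D'}\,W^V X_D\,\phi(W^K X_D)^\top = W_{init}$, certifying that the softmax-attention implicit weight is exactly the optimum of the $\ell_2$ distillation objective.

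The main obstacle I anticipate lies in the Gram-matrix cancellation: $\Phi\Phi^\top$ is the inner-product matrix of the (possibly infinite-dimensional) Mercer features of the demonstration tokens, and its identification with the softmax normalizer $D$ is only approximate in general. A clean argument needs either (i) near-orthonormality of $\phi(W^K x_i)$ under the positive-definite softmax kernel so that $\Phi\Phi^\top \approx D'\,I$ and the closed form survives up to a single scalar, or (ii) a reformulation in which the student's effective kernel is the row-normalized softmax kernel, which makes the identification exact by construction. Everything else—the loss expansion, the gradient computation, and the matching against Eq.~\eqref{new_hm+1}—is routine linear algebra once this absorption step is justified, and the resulting equality then certifies that the demonstration tokens $X_D$ distill the teacher signal $W^V x$ into the initial weights of the reference model.
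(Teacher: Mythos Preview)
Your setup—the empirical distillation set on $X_D$, the teacher $f_T(x)=W^Vx$, and the student $f_S(x;W)=W\phi(W^Kx)$—matches the paper exactly, and the loss expansion and differentiation are indeed routine. The gap is in what you do next. You aim for the \emph{exact minimizer} by solving the normal equations $W^\star\Phi\Phi^\top=W^VX_D\Phi^\top$, and then try to absorb the $r\times r$ Gram operator $(\Phi\Phi^\top)^{-1}$ into the scalar $D'$. As you suspect, this step does not go through: $D'$ is the scalar partition function $\sum_i\exp\bigl((W^Kx_i)^\top(W^Qx'_{M+1})/\sqrt d\bigr)$, whereas $\Phi\Phi^\top$ is a genuine operator on the (possibly infinite-dimensional) RKHS. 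Neither of your proposed fixes—near-orthonormality of the $\phi(W^Kx_i)$ or a row-normalised kernel reformulation—holds for the softmax kernel without additional assumptions that the theorem does not make, so the closed-form minimizer route stalls precisely where you flag it.

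The paper avoids this obstacle entirely by never solving for the minimizer. Instead it takes \emph{one step of gradient descent from the zero initialization}: evaluating $\nabla_W\mathcal{L}_{\text{KD}}$ at $W=\mathbf{0}$ kills the $W\phi\phi^\top$ term and leaves $-\tfrac{2}{N}\sum_i W^Vx_i\,\phi(W^Kx_i)^\top$, so the update is simply $W^*=\tfrac{2\eta^*}{N}W^VX_D\,\phi(W^KX_D)^\top$. Matching this to $W_0=\tfrac{1}{D'}W^VX_D\,\phi(W^KX_D)^\top$ is then a one-line identification of scalars, $\tfrac{2\eta^*}{N}=\tfrac{1}{D'}$, with no Gram-matrix inversion required. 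This is also thematically consistent with the rest of the paper, which repeatedly interprets attention as a \emph{single} gradient step (Lemmas~\ref{lem:linear_attention_gd} and~\ref{lem:softmax_attention_gd}) rather than as full optimization.
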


\begin{proof}[Sketch of Proof] Let $W_{init}$ be the initial weight matrix. In the in-context learning scenario, the distillation set is $\mathcal{D}^*=X_D$, and the knowledge of the teacher model is implicitly encoded in the pre-trained parameters $W^V, W^K$. The teacher output is defined as $f_T(x_i)=W^V x_i$. The student model is defined as $f_S(x) = W\phi(W^K x)$. Then we minimize the output discrepancy between student and teacher models on demonstration tokens:
\begin{equation}
\begin{aligned}
\mathcal{L}_{\text{KD}}(W) =&\; \mathbb{E}_{x \sim \mathcal{D}^*} \bigl[||f_T(x_i) - f_S(x)||^2\bigr]=\frac{1}{N}\sum_{i=1}^N \left\| W\phi(W^K x_i) - W^V x_i \right\|^2  
\end{aligned}
\end{equation}

The gradient of $L_{\text{KD}}$ with respect to $W$ is computed as:
\begin{equation}
\frac{\partial \mathcal{L}_{\text{KD}}}{\partial W} = \frac{2}{N}\sum_{i=1}^N \left[ W\phi(W^K x_i) - W^V x_i \right] \phi(W^K x_i)^\top
\end{equation}

Performing single-step gradient descent from initial $W_{\text{init}} = \mathbf{0}$ with learning rate $\eta^*$:
\begin{equation}
\label{eq:W_*}
W^* = W_{\text{init}} - \eta^* \left. \frac{\partial L_{\text{KD}}}{\partial W} \right|_{W=0} = \frac{2\eta^*}{N} \sum_{i=1}^N W^V x_i \phi(W^K x_i)^\top
\end{equation}

In Eq.~\eqref{new_hm+1}, $W_0$ of the reference model is $\frac{1}{D'}W^VX_D\phi(W^KX_D)^\top$. Obviously, when let $\frac{2\eta^*}{N}=\frac{1}{D'}$ we can have:
\begin{align}
\label{eq:W^*_and_W_0}
W^* =  \frac{2\eta^*}{N} \sum_{i=1}^N W^V x_i \phi(W^K x_i)^\top =  \frac{2\eta^*}{N}W^V X_D \phi(W^K x_D)^\top = W_0
\end{align}
Thus, the proof is completed.
\end{proof}
Recall that $D^{-1}=\text{diag}\bigl(\mathbf{1}_N^\top\;\phi(\widetilde{W}^KX)^\top\phi(\widetilde{W}^Qx'_{M+1})\bigr)$, let $D^{-1}=\frac{1}{D'}I_N$, then we can have
\begin{align}
D'=\mathbf{1}_N^\top\phi(\widetilde{W}^KX)^\top\phi(\widetilde{W}^Qx'_{M+1})=\sum_{i=1}^{N+M}\exp\!\bigl((W^{K}x_i)^{\top}(W^{Q}x'_{M+1})/\sqrt d\bigr),
\end{align}
So, the factor $\frac{1}{D'}$ in Eq.~\eqref{new_hm+1} is the \emph{partition function} (normalization constant) of the softmax attention, $\frac{1}{D'}$ rescales the un‑normalized scores to a probability simplex.
When only a few keys are highly similar to the query, the summation $D'$ is small, and the resulting distribution is sharply peaked; conversely, many comparable similarities make $D'$ large and lead to a flatter, higher‑entropy distribution.

Notice that $\eta\propto\frac{1}{D'}$, this implies: For an input token sequence exhibiting domain stationarity (which means flat attention distribution), the model’s attention is diffuse, with similar attention scores across all input tokens. This corresponds to a large $D'$ value, which yields a smaller $\eta$ and consequently gentler gradient updates during reference model initialization from the knowledge distillation perspective.
Conversely, a weaker normalization intensity (smaller $D'$) reflects sharpened attention focus, where a subset of input tokens receives disproportionately higher attention scores than others. This induces a larger $\eta$, necessitating more aggressive knowledge distillation to align the reference model.

Overall, we propose that in the \emph{implicit Knowledge Distillation} of In-Context Learning, large language models generate \emph{value} responses to input demonstration tokens $x$ (referred to as prompts), functioning as the \emph{teacher} model. The \emph{student} model, correspondingly, constitutes a mapping of \emph{key} responses to input $x$ within a (potentially infinite) high-dimensional latent space. This implies that during the distillation process, weight initialization represents the distillation of pretrained LLMs' prompt-induced value responses into key tokens as compact representations. Consequently, when encountering novel \emph{query} tokens, their interaction with key tokens inherently performs implicit computations with the complete value space of the original pretrained LLM. This framework aligns seamlessly with the research community's established understanding of Query-Key-Value mechanisms.

\subsection{Generalization error bound in the ICL-KD procedure}
Nevertheless, performance degradation is inevitably incurred during the knowledge distillation process. We aim to establish a generalization error bound for in-context learning from the knowledge distillation perspective. First, we give several instrumental lemmas.

\begin{lemma}[Talagrand's Contraction Lemma \cite{ledoux2013probability,bartlett2002rademacher}]
\label{lemma:talagrand}
Let $\mathcal{F}\subseteq \mathbb{R}^\mathcal{X}$ be a class of real-valued functions, and let $\psi:\mathbb{R}\to\mathbb{R}$ be a Lipschitz-continuous function with $\psi(0)=0$ and Lipschitz constant $L$, i.e., $|\psi(a)-\psi(b)| \le L \cdot |a-b|$ for all $a,b\in \mathbb{R}$. Then the Rademacher complexity of the composed class $\psi \circ \mathcal{F}$ satisfies
\(
  \mathcal{R}_N(\psi \circ \mathcal{F}) \; \le \;L \,\mathcal{R}_N(\mathcal{F}).
\)
\end{lemma}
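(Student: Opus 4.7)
The plan is to prove the contraction inequality by the standard \emph{peeling} (one-coordinate-at-a-time desymmetrization) argument, reducing Rademacher complexity of $\psi\circ\mathcal{F}$ to that of $\mathcal{F}$ by stripping one Rademacher sign at a time. I would start from the definition $\mathcal{R}_N(\psi\circ\mathcal{F})=\mathbb{E}_{X,\sigma}\!\bigl[\sup_{f\in\mathcal{F}}\tfrac{1}{N}\sum_{i=1}^{N}\sigma_i\,\psi(f(x_i))\bigr]$, fix the sample $x_1,\dots,x_N$, and condition on every Rademacher variable except one, say $\sigma_N$. Writing $a(f):=\sum_{i<N}\sigma_i\psi(f(x_i))$, the inner object becomes $\mathbb{E}_{\sigma_N}\!\bigl[\sup_{f}(a(f)+\sigma_N\psi(f(x_N)))\bigr]$.

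The main lemma to establish is the single-coordinate contraction
\[
\mathbb{E}_{\sigma_N}\!\bigl[\sup_{f}\{a(f)+\sigma_N\psi(f(x_N))\}\bigr]
\;\le\;
\mathbb{E}_{\sigma_N}\!\bigl[\sup_{f}\{a(f)+L\,\sigma_N f(x_N)\}\bigr].
\]
For this I would rewrite the left-hand expectation as $\tfrac12\sup_{f,g\in\mathcal{F}}\bigl(a(f)+a(g)+\psi(f(x_N))-\psi(g(x_N))\bigr)$ by explicitly averaging $\sigma_N\in\{+1,-1\}$ and then pairing the two suprema. The Lipschitz hypothesis gives $\psi(f(x_N))-\psi(g(x_N))\le L\,|f(x_N)-g(x_N)|$, and by symmetry of the supremum in $(f,g)$ (swap if the sign inside the absolute value is negative) the absolute value can be dropped, yielding $\tfrac12\sup_{f,g}\bigl(a(f)+a(g)+L(f(x_N)-g(x_N))\bigr)$, which is exactly $\mathbb{E}_{\sigma_N}\!\bigl[\sup_{f}\{a(f)+L\sigma_N f(x_N)\}\bigr]$.

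Having proved the single-step contraction, I would iterate: apply the same argument to $\sigma_{N-1}$, then $\sigma_{N-2}$, and so on down to $\sigma_1$, each time replacing a $\psi(f(x_i))$ by $L\,f(x_i)$ while leaving every other summand untouched (the already-processed coordinates now carry a factor $L$, which plays no role in the next step since only the coordinate currently being peeled is affected). After $N$ iterations and taking expectation over $X$ again, I obtain
\[
\mathcal{R}_N(\psi\circ\mathcal{F})\;\le\;\mathbb{E}_{X,\sigma}\!\Bigl[\sup_{f\in\mathcal{F}}\tfrac{1}{N}\sum_{i=1}^{N}L\,\sigma_i f(x_i)\Bigr]\;=\;L\,\mathcal{R}_N(\mathcal{F}),
\]
where the constant $L$ is pulled out of the supremum since it is nonnegative.

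The main obstacle, and the only genuinely nontrivial step, is the symmetrization-plus-Lipschitz move that legitimizes dropping the absolute value after invoking $|\psi(a)-\psi(b)|\le L|a-b|$; this relies crucially on the symmetry of the supremum over the pair $(f,g)$ rather than a single $f$. I note that the hypothesis $\psi(0)=0$ is not actually used in the argument above—it is a convenient normalization inherited from the classical statement but plays no role in the one-coordinate contraction, so I would simply remark on this rather than invoke it. The iteration itself is mechanical once the single-coordinate step is in hand.
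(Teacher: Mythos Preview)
Your proposal is correct and follows essentially the same peeling argument as the paper's proof: both condition on all but one Rademacher sign, use the Lipschitz property together with the symmetry of the two-function supremum to replace $\psi(f(x_i))$ by $L f(x_i)$ at that coordinate, and then iterate. The only cosmetic difference is that the paper handles the supremum via an $\epsilon$-approximation (choosing near-optimal $f_1,f_2$) whereas you work directly with the joint supremum over pairs $(f,g)$; your observation that $\psi(0)=0$ is unused is also correct.
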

\begin{proof}
    Proof is provided in Appendix~\ref{app:talagrand}.
\end{proof}

\begin{lemma}[Rademacher Complexity of Linear Operators \cite{bartlett2002rademacher}]
\label{lemma:rademacher-linear}
Consider the function class
\(
  \mathcal{H} \;=\; \Bigl\{\; h(W,x) \;=\; W\,\phi(x)\;\Big|\;\|W\|_F \,\le\, B\Bigr\},
\)
where $\phi:\mathcal{X}\to \mathbb{R}^r$ satisfies $\|\phi(x)\|\;\le\;C$ for all $x$. Then its Rademacher complexity is bounded by
\(
  \mathcal{R}_N(\mathcal{H}) \;\le\; \frac{B\,C}{\sqrt{N}}.
\)
\end{lemma}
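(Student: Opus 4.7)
The plan is to reduce the bound to a standard Cauchy--Schwarz/Jensen calculation in the Hilbert space where $\phi$ takes values. First I would unfold the definition of the empirical Rademacher complexity,
\[
\mathcal{R}_N(\mathcal{H}) \;=\; \mathbb{E}_{\sigma}\!\left[\,\sup_{\|W\|_F\le B}\frac{1}{N}\sum_{i=1}^{N}\sigma_i\,\langle W,\,\phi(x_i)\rangle\right],
\]
using the fact that $h(W,x)=W\phi(x)$ is linear in $W$ (treating $W$ either as a vector when the output is scalar or, more generally, interpreting the inner product as the Frobenius inner product between $W$ and a rank-one matrix associated with $\phi(x_i)$ and the Rademacher-tagged output direction, so the same chain applies).

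Second, I would pull the sum inside the inner product and apply Cauchy--Schwarz (in Frobenius norm) together with the constraint $\|W\|_F\le B$, which yields
\[
\sup_{\|W\|_F\le B}\Bigl\langle W,\,\tfrac{1}{N}\textstyle\sum_i \sigma_i\phi(x_i)\Bigr\rangle \;\le\; \frac{B}{N}\,\Bigl\|\textstyle\sum_i \sigma_i\phi(x_i)\Bigr\|.
\]
Third, I would apply Jensen's inequality to the concave square-root, $\mathbb{E}_\sigma\|\cdot\|\le\sqrt{\mathbb{E}_\sigma\|\cdot\|^2}$, and then expand the square. Using $\mathbb{E}[\sigma_i\sigma_j]=\delta_{ij}$ (independence and zero mean of the Rademacher variables) kills the cross-terms, leaving $\sum_{i=1}^N \|\phi(x_i)\|^2$. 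Fourth, invoking the uniform bound $\|\phi(x_i)\|\le C$ gives $\sum_i\|\phi(x_i)\|^2\le NC^2$, which combines with the $B/N$ prefactor to produce exactly $BC/\sqrt{N}$.

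The main obstacle, or at least the only non-mechanical point, is the vector/matrix-valued output: $W\phi(x)\in\mathbb{R}^{d_o}$ is not scalar, whereas Rademacher complexity is usually defined for $\mathbb{R}$-valued classes. I would resolve this by either (i) adopting the standard vector-valued Rademacher complexity in which the $\sigma_i$ are replaced by independent Rademacher vectors and the inner product is the Euclidean inner product in $\mathbb{R}^{d_o}$ (so the Cauchy--Schwarz step uses the Frobenius norm of $W$ paired with an outer-product-type object), or (ii) reducing to the scalar case by bounding the complexity of each coordinate of $W\phi(x)$ separately under the per-row Frobenius budget. Both routes give the same $BC/\sqrt{N}$ bound, so the argument is robust to the convention; everything else is routine. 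The full derivation can be deferred to an appendix.
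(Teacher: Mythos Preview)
Your proposal is correct and follows essentially the same route as the paper: Cauchy--Schwarz in the Frobenius inner product to peel off the factor $B$, Jensen on the square root, expansion of the second moment using $\mathbb{E}[\sigma_i\sigma_j]=\delta_{ij}$, and the uniform bound $\|\phi(x_i)\|\le C$. Your explicit discussion of the vector-valued output (which the paper glosses over) is a welcome addition and does not change the argument.
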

\begin{proof}
    Proof is provided in Appendix~\ref{app:rademacher-linear}.
\end{proof}

\begin{theorem}[Generalization Bound for Implicit Knowledge Distillation]
\label{thm:generalization-bound}
Suppose the following conditions hold:
\textbf{(i)} The teacher model is given by $f_T(x) \;=\; W^V x$ with $\|W^V x\| \,\le\, D$ for all $x$;
\textbf{(ii)} The student model is $f_S(x;W) = W\,\phi\bigl(W^K x\bigr)$, where $\|W\|_F \,\le\, B$ and $\|\phi(W^K x)\| \,\le\, C$ for all $x$.  Here, $W^K$ is the fixed weight from LLM;
\textbf{(iii)} the demonstration samples $\{x_i\}_{i=1}^N$ are drawn i.i.d.\ from $P_{X_D}$.
Then for any $\delta>0$, with probability at least $1-\delta$, the expected risk $L(W)$ and the empirical risk $\hat{L}(W)$ satisfy
\begin{equation}
\label{eq:generalization_bound}
  L(W)
  \;\;\le\;\;
  \hat{L}(W)
  \;+\;
  \frac{4\,B\,C\;\bigl(D + B\,C\bigr)}{\sqrt{N}}
  \;+\;
  3\;\bigl(D + B\,C\bigr)^2 \,\sqrt{\frac{\log\!\bigl(2/\delta\bigr)}{2\,N}}. 
\end{equation}
\end{theorem}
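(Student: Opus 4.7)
The plan follows a standard Rademacher-complexity generalization argument in four steps: verify the squared-distance loss is bounded and Lipschitz in the student output, invoke the symmetrization / McDiarmid bound for bounded losses, peel off the squared norm with Talagrand's contraction (Lemma~\ref{lemma:talagrand}), and close the loop with Lemma~\ref{lemma:rademacher-linear}. Write the per-sample loss as $\ell(W,x)=\|W^Vx-W\phi(W^Kx)\|^2$ and the student class as $\mathcal{F}_S=\{x\mapsto W\phi(W^Kx):\|W\|_F\le B\}$, so that $L(W)=\mathbb{E}_{P_{X_D}}[\ell(W,x)]$ and $\hat L(W)=\tfrac1N\sum_i\ell(W,x_i)$.

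\paragraph{Key calculations.}
First I would bound $\ell$ uniformly. By the triangle inequality and assumptions (i)--(ii),
\[
\|W^Vx-W\phi(W^Kx)\|\;\le\;\|W^Vx\|+\|W\|_F\|\phi(W^Kx)\|\;\le\;D+BC,
\]
so $0\le\ell(W,x)\le(D+BC)^2$. For any fixed $a\in\mathbb{R}^d$ with $\|a\|\le D$, the map $u\mapsto\|a-u\|^2$ has gradient $2(u-a)$ of norm at most $2(D+BC)$ on the ball $\{\|u\|\le BC\}$, hence is $2(D+BC)$-Lipschitz on the relevant domain. The classical symmetrization plus McDiarmid argument for losses of range $M=(D+BC)^2$ then yields, with probability at least $1-\delta$,
\[
L(W)\;\le\;\hat L(W)+2\mathcal{R}_N(\ell\circ\mathcal{F}_S)+3(D+BC)^2\sqrt{\tfrac{\log(2/\delta)}{2N}}.
\]
I would then apply Lemma~\ref{lemma:talagrand} with Lipschitz constant $2(D+BC)$ to obtain $\mathcal{R}_N(\ell\circ\mathcal{F}_S)\le 2(D+BC)\,\mathcal{R}_N(\mathcal{F}_S)$, and Lemma~\ref{lemma:rademacher-linear} to obtain $\mathcal{R}_N(\mathcal{F}_S)\le BC/\sqrt{N}$. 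Substituting gives exactly $4BC(D+BC)/\sqrt N$ as the complexity term and recovers Eq.~\eqref{eq:generalization_bound}.

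\paragraph{Main obstacle.}
The delicate step is the contraction. Two technicalities arise simultaneously: (a)~$f_S(x;W)$ is vector-valued in $\mathbb{R}^d$, whereas Lemma~\ref{lemma:talagrand} is stated for scalar Lipschitz $\psi$; and (b)~the contracting function $\psi_x(u)=\|W^Vx-u\|^2$ depends on the sample $x$ and satisfies $\psi_x(0)=\|W^Vx\|^2\neq 0$ in general. I would handle (b) by subtracting the $W$-independent constant $\|W^Vx\|^2$, which leaves the Rademacher complexity unchanged because the $\sigma_i$ are mean-zero, and by appealing to the sample-dependent version of the contraction principle. I would handle (a) either by invoking a vector-valued contraction inequality of Maurer type, or more elementarily by decomposing $\|a-u\|^2=\sum_{j=1}^d(a_j-u_j)^2$ and contracting each scalar coordinate with Lipschitz constant at most $2(D+BC)$ before summing. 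In either reduction, Lemma~\ref{lemma:rademacher-linear} still controls each row projection of $\mathcal{F}_S$ because the Frobenius-norm bound on $W$ dominates every row norm, so the final $BC/\sqrt N$ scaling is preserved.
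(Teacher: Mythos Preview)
Your proposal is correct and follows essentially the same route as the paper: bound the loss by $(D+BC)^2$, apply symmetrization plus McDiarmid for the concentration term, peel off the squared norm via Talagrand's contraction with Lipschitz constant $2(D+BC)$, and finish with the linear Rademacher bound $BC/\sqrt{N}$. The only cosmetic difference is that the paper absorbs the teacher output into the base class by setting $\mathcal{H}=\{x\mapsto f_T(x)-W\phi(W^Kx)\}$ so that $\psi(z)=\|z\|^2$ satisfies $\psi(0)=0$ automatically, whereas you keep $f_T(x)$ in the sample-dependent contraction $\psi_x$ and cancel it with the mean-zero Rademacher variables---these are equivalent, and your explicit discussion of the vector-valued contraction issue is in fact more careful than the paper's treatment, which applies the scalar Lemma~\ref{lemma:talagrand} to $\psi:\mathbb{R}^d\to\mathbb{R}$ without comment.
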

\begin{proof}[Sketch of proof]
\noindent
\emph{Step 1: Define the loss function class.}\\
Let the single-sample loss (same as Eq.~\eqref{eq:theorem_ICL-KD_loss}) be
\(
  l(W,x) \;=\; \bigl\|\,f_T(x) \;-\; W\,\phi\bigl(W^K x\bigr)\bigr\|^2,
\)
and define the hypothesis set
\(
  \mathcal{L} \;=\; \Bigl\{\;l(W,\cdot)\,\Big|\,\|W\|_F \le B\Bigr\}.
\)

\noindent
\emph{Step 2: Decompose the loss function.}\\
Set 
\(
  h(W,x) =f_T(x)-W\phi\bigl(W^K x\bigr), 
\)
so $l(W,x) =\|h(W,x)\|^2.$ Let
\(
  \mathcal{H} =\bigl\{h(W,\cdot)\bigm|\|W\|_F \le B\bigr\},
\)
and observe that $\mathcal{L} \;=\; \psi \circ \mathcal{H}$ with $\psi(z)=\|z\|^2$.

\noindent
\emph{Step 3: Verify the conditions for Talagrand's contraction.}\\
For any $z_1,z_2 \in\mathbb{R}^d$, we have
\(
  \bigl|\psi(z_1)-\psi(z_2)\bigr|
  \;=\;
  \bigl|\|z_1\|^2 - \|z_2\|^2\bigr|
  \;\le\;
  (\|z_1\|\,+\,\|z_2\|)\;\|z_1 - z_2\|.
\)
Since
\(
  \|h(W,x)\| \;\le\; \|f_T(x)\|\;+\;\bigl\|W\,\phi(W^K x)\bigr\|
  \;\le\;
  D + B\,C,
\)
we get that $\psi(\cdot)=\|\cdot\|^2$ is $L$-Lipschitz over the region of interest with 
\(
  L \;=\; 2\,\sup_{z}\|z\|
  \;\le\;
  2\,(D + B\,C).
\)
Hence, by Lemma~\ref{lemma:talagrand} we have
\[
  \mathcal{R}_N\bigl(\mathcal{L}\bigr)
  \;=\;
  \mathcal{R}_N\bigl(\psi\circ\mathcal{H}\bigr)
  \;\le\;
  L\,\mathcal{R}_N\bigl(\mathcal{H}\bigr)
  \;=\;
  2\,(D + B\,C)\,\mathcal{R}_N\bigl(\mathcal{H}\bigr).
\]

\noindent
\emph{Step 4: Estimate the Rademacher complexity of the base class.}\\
By Lemma~\ref{lemma:rademacher-linear}, since $h(W,x)=f_T(x)-W\,\phi(W^K x)$ differs from $W\,\phi(\cdot)$ only by a constant shift in $f_T(x)$ (independent of $W$), we have
\(
  \mathcal{R}_N\bigl(\mathcal{H}\bigr)
  \;\le\;
  \frac{B\,C}{\sqrt{N}},
\)
and thus
\(
  \mathcal{R}_N\bigl(\mathcal{L}\bigr)
  \;\le\;
  \frac{2\,B\,C\,\bigl(D + B\,C\bigr)}{\sqrt{N}}.
\)

\noindent
\emph{Step 5: Apply McDiarmid's inequality~\cite{mcdiarmid1989method}.}\\
Define
\(
  \Phi(X_D)
  \;=\;
  \sup_{\|W\|_F \le B}\;\bigl(L(W)\;-\;\hat{L}(W)\bigr),
\)
where $L(W)$ is the expected loss and $\hat{L}(W)$ the empirical loss on $X_D = \{x_i\}_{i=1}^N$.  A single-sample replacement in $X_D$ can change $\Phi(X_D)$ by at most $2(D+BC)^2 / N$, so by McDiarmid's inequality~\cite{mcdiarmid1989method}, we have
\[
  \mathbb{P}\Bigl[
  \Phi(X_D) \;\ge\;\mathbb{E}\bigl[\Phi(X_D)\bigr]+t
  \Bigr]
  \;\le\;
  \exp\!\biggl(
    -\,\frac{2\,N\,t^2}{\,4\,(D+BC)^4\,}
  \biggr).
\]
Choosing 
\(
  t \;=\; 3\,(D + B\,C)^2\;\sqrt{\frac{\log\bigl(2/\delta\bigr)}{2\,N}},
\)
and using a standard symmetrization argument that $\mathbb{E}\bigl[\Phi(X_D)\bigr] \le 2\,\mathcal{R}_N(\mathcal{L})$, we finally obtain, with probability at least $1-\delta$,
\[
  L(W)
  \;\le\;
  \hat{L}(W)
  ~+~
  \frac{4\,B\,C\,\bigl(D + B\,C\bigr)}{\sqrt{N}}
  ~+~
  3\,(D + B\,C)^2 \,\sqrt{\frac{\log\!\bigl(2/\delta\bigr)}{2\,N}}.
\]

Thus, we finish the proof. The full proof is provided in Appendix~\ref{app:thm:generalization-bound}.
\end{proof}
\paragraph{Discussion.}
Theorem \ref{thm:generalization-bound} suggests that when viewing in-context learning from the perspective of knowledge distillation, the prompt provided to the LLM (which we define as demonstration tokens) influences the model's generalization ability. In other words, the nature of the prompt significantly affects a trained LLM's actual performance on downstream tasks (a notion consistent with our intuition and extensive empirical research. Reviewing the generalization error bound in Theorem \ref{thm:generalization-bound} as shown in Eq.~\eqref{eq:generalization_bound}, we observe that limiting the norm of certain parameter matrices within the LLM can reduce the generalization error. This idea aligns with numerous studies and modern Transformer training strategies and theoretical research~\cite{loshchilov2018decoupled,zhang2019root,park2024improving}. Furthermore, the length of the demonstration tokens also impacts generalization performance. This is highly intuitive, as providing the LLM with more example prompts allows the model to better understand the user's desired downstream task, which is also supported by existing empirical research (\cite{liu2025effects}).

Overall, we present the context learning process for large language models from the perspective of knowledge distillation, and provide the generalization error bound in this implicit distillation process.

\section{The Impact of Prompt Shift from the Perspective of ICL-KD}
In Sec~\ref{sec:icl-kd}, we demonstrated that the nature of demonstration tokens during the in-context learning process affects the performance of student models. This naturally raises the question of whether, from the perspective of knowledge distillation, there is theoretical evidence to support that the quality of a prompt influences ICL performance. 
In this section, we theoretically analyze the impact of the distribution differences between Prompt and Query on the performance of the ICL-KD method.

\begin{theorem}[Offset Boundary of the Initial Weights in the ICL-KD Process]
\label{thm:offset_boundary_of_initial_weights}
Assume  
\textbf{(i)} every target-domain sample obeys $\|x\|\le M_x$, and the feature map is bounded by $\|\phi(W^Kx)\|\le M_\phi$ with fixed key weights $W^K\in\mathbb R^{k\times d}$;  
\textbf{(ii)} the value weights satisfy $\|W^V\|_F\le M_V$;  
\textbf{(iii)} the second–moment matrix $\Sigma_\phi:=\mathbb E_{x\sim\mathcal D}\!\bigl[\phi(W^Kx)\phi(W^Kx)^\top\bigr]$ is invertible;  
\textbf{(iv)} the demonstration tokens $X_D=\{x_i\}_{i=1}^N$ are i.i.d.\ from a distribution $Q$;  
\textbf{(v)} the one-step distilled weight is $W_0=\tfrac{\eta}{N}\,W^V X_D\phi(W^K X_D)^\top$ for some $\eta>0$ (similar with $W_0$ in Eq.~\eqref{eq:W^*_and_W_0}); and  
\textbf{(vi)} the maximum-mean-discrepancy between $\mathcal D$ and $Q$ is  
\[
\mathrm{MMD}(\mathcal D,Q)=
\sup_{\|f\|_{\mathcal H}\le1}
\Bigl|
\mathbb E_{x\sim\mathcal D}[f(x)]
-\mathbb E_{x\sim Q}[f(x)]
\Bigr|,
\]
where $\mathcal H$ is the RKHS induced by the kernel
$\langle x\,\phi(W^Kx)^\top,\;x'\,\phi(W^Kx')^\top\rangle$.
Let the optimal reference weight be
\begin{equation}
  W^\star
  =\arg\min_W\;\mathbb E_{x\sim\mathcal D}\bigl\|W\phi(W^Kx)-W^Vx\bigr\|_2^2
  =\Sigma_\phi^{-1}\,
   \mathbb E_{x\sim\mathcal D}\bigl[W^Vx\,\phi(W^Kx)^\top\bigr].
\end{equation}
Then the expectation of the distilled weight and its deviation from $W^\star$ satisfy
\[
  \mathbb E[W_0]
  =\eta\,W^V\,\mathbb E_{x\sim Q}\bigl[x\,\phi(W^Kx)^\top\bigr],
\]
and, defining
\(
  \Delta W
  :=\mathbb E[W_0]-W^\star,
\)
we have the bound
\[
  \|\Delta W\|_F
  \;\le\;
  \eta\,M_V\,M_x\,M_\phi\;\mathrm{MMD}(\mathcal D,Q).
\]
\end{theorem}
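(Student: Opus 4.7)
The overall plan is to recognize $\mathbb E[W_0]$ and the target $W^\star$ as two mean embeddings of a single matrix-valued feature map $\Psi(x):=x\,\phi(W^Kx)^\top$ under the distributions $Q$ and $\mathcal D$, and then to convert the gap between these embeddings into the scalar $\mathrm{MMD}(\mathcal D,Q)$ via the dual characterization of MMD in the RKHS specified in (vi). The first claim is essentially a calculation: by the explicit form of $W_0$ in assumption (v), linearity of expectation, and the i.i.d.\ assumption (iv), I obtain
\(
\mathbb E[W_0]
=\tfrac{\eta}{N}\sum_{i=1}^N\mathbb E_{x_i\sim Q}\bigl[W^Vx_i\,\phi(W^Kx_i)^\top\bigr]
=\eta\,W^V\,\mu_Q,
\)
where $\mu_Q:=\mathbb E_{x\sim Q}[\Psi(x)]$.

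For the bound on $\|\Delta W\|_F$, I would first put $W^\star$ into the same canonical form. Writing out the first-order optimality condition of the quadratic program defining $W^\star$ and using invertibility of $\Sigma_\phi$ in (iii) gives $W^\star\,\Sigma_\phi=W^V\mu_{\mathcal D}$, so in the regime consistent with (v) (the one-step distillation setting of Theorem~\ref{the:w0_and_kd}, where $\eta$ is calibrated so that $\eta\,\Sigma_\phi=I$) we have $W^\star=\eta\,W^V\mu_{\mathcal D}$ with $\mu_{\mathcal D}:=\mathbb E_{x\sim\mathcal D}[\Psi(x)]$, yielding the clean decomposition
\[
\Delta W=\mathbb E[W_0]-W^\star=\eta\,W^V\bigl(\mu_Q-\mu_{\mathcal D}\bigr).
\]
The remaining work is to chain two norm bounds. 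Sub-multiplicativity of the Frobenius norm with (ii) gives $\|\Delta W\|_F\le \eta\,M_V\,\|\mu_Q-\mu_{\mathcal D}\|_F$, and identifying $\Psi$ as the canonical feature map of the kernel in (vi) together with the uniform bound $\|\Psi(x)\|_F\le M_xM_\phi$ from (i) yields $\|\mu_Q-\mu_{\mathcal D}\|_F\le M_xM_\phi\,\mathrm{MMD}(\mathcal D,Q)$; this last inequality is obtained by testing the MMD supremum against the RKHS-normalized witness pointing in the direction of $\mu_Q-\mu_{\mathcal D}$ and invoking the reproducing property. Multiplying the two bounds produces the stated $\eta\,M_V\,M_x\,M_\phi\,\mathrm{MMD}(\mathcal D,Q)$.

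The hard part will be the last identification. Two subtleties need care: first, the normalization between the un-scaled kernel $\langle \Psi(x),\Psi(x')\rangle$ in (vi) and the unit-ball supremum appearing in the MMD definition is exactly what forces the $M_xM_\phi$ constant to appear, so the witness has to be chosen as the RKHS-normalized analogue of the direction $W^{V\top}(\mu_Q-\mu_{\mathcal D})/\|W^{V\top}(\mu_Q-\mu_{\mathcal D})\|_F$ rather than its plain Frobenius-normalized version; second, reconciling the $\Sigma_\phi^{-1}$ prefactor in the stated $W^\star$ with the one-step distilled expression $\eta\,W^V\mu_{\mathcal D}$ requires restricting attention to the regime in which $W_0$ is genuinely the one-step gradient output (as in (v)) rather than the full population minimizer, which is the coupling between the learning-rate scaling in (v) and the covariance normalization in (iii).
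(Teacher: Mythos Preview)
Your proposal is correct and follows essentially the same route as the paper: compute $\mathbb E[W_0]$ by linearity, derive $W^\star$ from the normal equations, invoke a whitening/calibration assumption ($\eta\Sigma_\phi=I$) to collapse $\Delta W$ into $\eta\,W^V(\mu_Q-\mu_{\mathcal D})$, and then bound the mean-embedding gap via the MMD in the stated RKHS. The paper makes the same whitening move in the main text (and defers the general $\Sigma_\phi$ case, which picks up extra $\|\eta I-\Sigma_\phi^{-1}\|_2$ and $\|\Sigma_\phi^{-1}\|_2$ terms, to the appendix), so your identification of that reconciliation as the delicate point is exactly right.
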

\begin{proof}[Proof.]
\noindent
\emph{Step 1: Expectation of the distilled weight.}

By linearity,
\[
  \mathbb E[W_0]
  =\eta\,W^V\,
   \mathbb E_{x_i\sim Q}\Bigl[\tfrac1N\sum_{i=1}^N x_i\,\phi(W^Kx_i)^\top\Bigr]
  =\eta\,W^V\,\mathbb E_{x\sim Q}\bigl[x\,\phi(W^Kx)^\top\bigr].
\]

\noindent
\emph{Step 2: Optimal weight via normal equations.}

Define $\mathcal J(W)=\mathbb E_{\mathcal D}\|W\phi(W^Kx)-W^Vx\|^2$.  Writing $\phi=\phi(W^Kx)$,
one shows
\[
  \nabla_W\mathcal J(W)
  =2W\,\Sigma_\phi
   -2\,\mathbb E_{\mathcal D}[\phi\,(W^Vx)^\top]^{\!\top}.
\]
Setting this to zero yields
\(
  W^\star\,\Sigma_\phi
  =\mathbb E_{\mathcal D}[W^Vx\,\phi(W^Kx)^\top],
\)
so
\[
  W^\star
  =\Sigma_\phi^{-1}\,
   \mathbb E_{x\sim\mathcal D}[\,W^Vx\,\phi(W^Kx)^\top].
\]

\noindent
\emph{Step 3: Deviation bound via MMD.}

If we whiten $W^\star$, observe that
\(
  \Delta W
  =\eta\,W^V\Bigl(\mathbb E_{Q}[x\phi^\top]-\mathbb E_{\mathcal D}[x\phi^\top]\Bigr).
\)
Then vectorize $A(x)=\mathrm{vec}(x\,\phi(W^Kx)^\top)$ to get
\(
  \|\Delta W\|_F
  =\eta\,\|W^V\|_F\,
   \|\mathbb E_{Q}[A(x)]-\mathbb E_{\mathcal D}[A(x)]\|_2.
\)
Since $\|A(x)\|\le\|x\|\|\phi\|\le M_xM_\phi$, by the definition of MMD in the stated RKHS,
\[
  \|\mathbb E_{Q}[A]-\mathbb E_{\mathcal D}[A]\|_2
  \le M_xM_\phi\;\mathrm{MMD}(\mathcal D,Q).
\]
Combining gives the claimed bound. Noticed that for the sake of a neat derivation, we applied feature whitening to $W^\star$. It is important to state that this does not affect the intrinsic nature of the conclusion. In Appendix~\ref{app:thm:offset_boundary_of_initial_weights}, we provide a complete and detailed proof without whitening in Theorem~\ref{thm:app:offset_boundary_of_initial_weights}.
\end{proof}

\paragraph{Discussion.}
The bound 
$\displaystyle\|\Delta W\|_F\le\eta\,M_V\,M_x\,M_\phi\,\mathrm{MMD}(\mathcal D,Q)$
provides a concise quantitative statement of the “good–prompt $\Rightarrow$ good–initialization” intuition.  
It says that the Frobenius‑norm error between the implicit one–step distillation weight $W_0$ and the task–optimal weight $W^\star$ grows \emph{at most linearly} with the maximum‑mean‑discrepancy between the demonstration distribution $Q$ (induced by the prompt) and the true query distribution $\mathcal D$.  
When the two distributions coincide, $\mathrm{MMD}(\mathcal D, Q)=0$ and the bias vanishes, so the model is effectively “pre‑aligned” after a single attention pass; as the distributions diverge, the initialisation error enlarges in direct proportion, explaining why mismatched or noisy prompts degrade performance.  
The constant factor $\eta M_V M_x M_\phi$ highlights how model scale ($M_V$), input magnitude ($M_x$), feature amplitude ($M_\phi$) and the softmax‑derived learning rate $\eta$ jointly modulate this sensitivity, thereby offering practical levers—normalising embeddings, tempering $\eta$, or retrieving semantically closer exemplars—to keep $\mathrm{MMD}$ and hence $\|\Delta W\|_F$ small.

\begin{theorem}[Prompt--Shift Risk Gap]
\label{cor:prompt_shift_risk_gap}
Assume the conditions of Theorem~\ref{thm:offset_boundary_of_initial_weights}.
Let $Q_g$ (\emph{good prompt}) and $Q_b$ (\emph{bad prompt}) satisfy
\[
  \mathrm{MMD}_g := \mathrm{MMD}(\mathcal D,Q_g)
  <  \mathrm{MMD}_b := \mathrm{MMD}(\mathcal D,Q_b).
\]
Denote by $W_g,W_b$ the one-step distilled weights obtained from $Q_g,Q_b$ respectively, and define the KD risk on the target domain $\mathcal{D}$ as
\[
  \mathcal{L}_{KD}^{\mathcal D}(W)
  := \mathbb{E}_{x\sim\mathcal D}\!
     \bigl\|\,W\phi(W^Kx) - W^Vx \,\bigr\|_2^2 .
\]
Then
\begin{equation}
\label{eq:risk_gap_bound}
  \mathcal{L}_{KD}^{\mathcal D}(W_b) - \mathcal{L}_{KD}^{\mathcal D}(W_g)
  \;\le\;
  8\eta M_T^2 M_\phi^{2}\bigl(\mathrm{MMD}_b-\mathrm{MMD}_g\bigr)
  \;+\;
  4\eta^{2} M_T^2 M_\phi^{4}\bigl(\mathrm{MMD}_b-\mathrm{MMD}_g\bigr)^{2}.
\end{equation}
Consequently, a smaller MMD distance between the prompt distribution and the target distribution always yields a (weakly) smaller KD risk.
\end{theorem}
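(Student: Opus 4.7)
My plan is to exploit the quadratic structure of $\mathcal{L}_{KD}^{\mathcal D}$ and reduce the risk gap to a Frobenius‑norm gap between the two distilled weights, which Theorem~\ref{thm:offset_boundary_of_initial_weights} already controls via MMD. The first step is an algebraic expansion: applying the identity $\|a\|^2-\|b\|^2=\|a-b\|^2+2\langle b,a-b\rangle$ with $a=W_b\phi(W^Kx)-W^Vx$ and $b=W_g\phi(W^Kx)-W^Vx$, so that $a-b=(W_b-W_g)\phi(W^Kx)$, and taking the expectation over $x\sim\mathcal D$, the gap splits into a quadratic self‑term $\mathbb E_{\mathcal D}\|(W_b-W_g)\phi\|^2$ plus a linear cross‑term $2\mathbb E_{\mathcal D}\langle W_g\phi-W^Vx,(W_b-W_g)\phi\rangle$.

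The second step is to bound each piece by Cauchy–Schwarz and the boundedness hypotheses. The quadratic term is at most $M_\phi^{2}\|W_b-W_g\|_F^{2}$. For the cross term, I would bound the teacher–student residual $\|W_g\phi-W^Vx\|_{L^2(\mathcal D)}$ by the teacher‑output bound $M_T$ together with the feature‑norm bound on $\|W_g\phi\|$ inherited from the one‑step distillation construction, giving a residual controlled by $2M_T$; Cauchy–Schwarz then yields a cross‑term bound of $4M_T M_\phi\|W_b-W_g\|_F$. Combined,
\[
  \mathcal{L}_{KD}^{\mathcal D}(W_b)-\mathcal{L}_{KD}^{\mathcal D}(W_g)
  \;\le\; 4M_T M_\phi\,\|W_b-W_g\|_F \;+\; M_\phi^{2}\|W_b-W_g\|_F^{2}.
\]

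The third step imports Theorem~\ref{thm:offset_boundary_of_initial_weights}. Since $W_g$ and $W_b$ are both one‑step distillations from prompt distributions $Q_g$ and $Q_b$, their difference in expectation equals $\eta W^V\bigl(\mathbb E_{Q_b}[x\phi^\top]-\mathbb E_{Q_g}[x\phi^\top]\bigr)$, which by the same MMD/RKHS argument used in Theorem~\ref{thm:offset_boundary_of_initial_weights} is controlled in Frobenius norm by $\eta M_V M_x M_\phi$ times a kernel‑mean discrepancy between $Q_b$ and $Q_g$. Writing $M_T=M_V M_x$ and plugging a bound of the form $\|W_b-W_g\|_F\le 2\eta M_T M_\phi\bigl(\mathrm{MMD}_b-\mathrm{MMD}_g\bigr)$ into the preceding display produces exactly the stated $8\eta M_T^{2}M_\phi^{2}(\mathrm{MMD}_b-\mathrm{MMD}_g)+4\eta^{2}M_T^{2}M_\phi^{4}(\mathrm{MMD}_b-\mathrm{MMD}_g)^{2}$.

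The hard part is precisely that last bound on $\|W_b-W_g\|_F$. A naive application of the triangle inequality in the RKHS gives $\mathrm{MMD}(Q_b,Q_g)\le\mathrm{MMD}_b+\mathrm{MMD}_g$, not $\mathrm{MMD}_b-\mathrm{MMD}_g$; the reverse triangle inequality points the wrong way for an upper bound. To close the argument as stated, one needs either a refined RKHS decomposition along the ray from $\mathbb E_{\mathcal D}[A(x)]$ — so that $Q_g$ and $Q_b$ have essentially collinear embedding shifts and the magnitudes subtract — or an additional monotonicity hypothesis relating the two prompts (e.g., $Q_b$ dominates $Q_g$ along the shift direction). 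If this is not imposed, the cleanest robust statement replaces $\mathrm{MMD}_b-\mathrm{MMD}_g$ with $\mathrm{MMD}(Q_b,Q_g)$ throughout, which still delivers the qualitative conclusion that aligning the prompt distribution with the target shrinks the KD risk gap.
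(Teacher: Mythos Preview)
Your three-step outline is essentially identical to the paper's argument: the same quadratic expansion $\|a\|^2-\|b\|^2=\|a-b\|^2+2\langle b,a-b\rangle$, the same bound $\Delta^2 M_\phi^2+4M_TM_\phi\Delta$ on the risk gap (the paper also obtains the residual bound $2M_T$ by assuming $2\eta M_\phi^2<1$), and the same substitution $\Delta=\|W_b-W_g\|_F\le 2\eta M_TM_\phi(\mathrm{MMD}_b-\mathrm{MMD}_g)$ at the end.

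The concern you raise in your final paragraph is exactly right, and it applies to the paper's proof as well. The paper simply asserts ``By Theorem~\ref{thm:offset_boundary_of_initial_weights}, $\|A_b-A_g\|_F\le M_TM_\phi(\mathrm{MMD}_b-\mathrm{MMD}_g)$,'' but that theorem only bounds each $\mathbb E[W_q]-W^\star$ by $\eta M_TM_\phi\,\mathrm{MMD}(\mathcal D,Q_q)$ separately; the triangle inequality then gives $\mathrm{MMD}_b+\mathrm{MMD}_g$, and the direct RKHS computation gives $\mathrm{MMD}(Q_b,Q_g)$, neither of which is dominated by $\mathrm{MMD}_b-\mathrm{MMD}_g$ without a collinearity assumption on the mean embeddings. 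So you have faithfully reconstructed the paper's route \emph{and} correctly identified its weak step; your suggested fix---stating the bound in terms of $\mathrm{MMD}(Q_b,Q_g)$---is the rigorous version that the paper's argument actually supports.
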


\begin{proof}[Sketch of proof.]

Recall that the one-step KD formula $W = 2\eta\,\mathbb{E}\bigl[f_T(x)\,\phi(W^Kx)^\top\bigr].$ And set
\(
  A_g := \mathbb{E}_{Q_g}[f_T(x)\phi^\top],\;
  A_b := \mathbb{E}_{Q_b}[f_T(x)\phi^\top].
\)
Then $W_b - W_g = 2\eta\,(A_b-A_g)$ and we write $\Delta := \|W_b-W_g\|_F = 2\eta\|A_b-A_g\|_F.$

\noindent
\emph{Step 1. Risk decomposition.}
\[
  \mathcal{L}_{KD}^{\mathcal D}(W_b) - \mathcal{L}_{KD}^{\mathcal D}(W_g)
  = \mathbb{E}_{\mathcal D}\!\Bigl[
        \|(W_b-W_g)\phi\|^2
        -2\bigl(f_T-W_g\phi\bigr)^\top (W_b-W_g)\phi
      \Bigr].
\]
\noindent
\emph{Step 2. Bounding each term.}

Since $\|\phi\|\le M_\phi$, we have
\(\|(W_b-W_g)\phi\|\le\Delta M_\phi\).
Moreover
\(\|f_T-W_g\phi\|
   \le M_T + \|W_g\|_F M_\phi
   \le M_T + 2\eta M_T M_\phi^{2}\)
For brevity in the main text, we further assume $2\eta M^2_\phi < 1$ to give a cleaner bound.  Substituting yields
\[
  \mathcal{L}_{\!KD}^{\mathcal D}(W_b)
  -\mathcal{L}_{\!KD}^{\mathcal D}(W_g)
  \,\le\, \Delta^{2}M_\phi^{2}
  +4M_T M_\phi\,\Delta .
\]
\noindent
\emph{Step 3. Expressing $\Delta$ via MMD.}
By Theorem~\ref{thm:offset_boundary_of_initial_weights},
\(
  \|A_b-A_g\|_F
  \le M_T M_\phi\bigl(\mathrm{MMD}_b-\mathrm{MMD}_g\bigr).
\)
Hence
\(
  \Delta
  \le 2\eta M_T M_\phi\bigl(\mathrm{MMD}_b-\mathrm{MMD}_g\bigr).
\)
Insert this bound into the inequality above to obtain~\eqref{eq:risk_gap_bound}. Appendix.~\ref{app:cor:prompt_shift_risk_gap} provides a detailed derivation of the exact, non-relaxed bound.
\end{proof}

\paragraph{Discussion.}
The risk gap bound reveals a clear \emph{monotone} relationship between the prompt–target distribution mismatch and the expected KD loss: whenever two prompt distributions $Q_g, Q_b$ satisfy $\mathrm{MMD}(\mathcal D, Q_g)<\mathrm{MMD}(\mathcal D ,Q_b)$, the corresponding one-step weights obey $\mathcal L^{\mathcal D}_{KD}(W_{Q_g})<\mathcal L^{\mathcal D}_{KD}(W_{Q_b})$.  In other words, a ``better'' prompt—defined solely by a smaller maximum-mean-discrepancy to the query domain—\emph{provably} yields no worse and typically strictly better in-context performance.  Because the bound is first- and second-order in the MMD distance, the benefit grows roughly linearly for small shifts and accelerates for severe domain mismatch, quantitatively explaining long-standing empirical observations that (i) well-matched or longer demonstrations help, while (ii) noisy or off-domain prompts hurt.  Practically, this establishes MMD minimisation as a principled objective for \textit{prompt retrieval, ranking, or construction}: candidate example sets can be ordered by a computable divergence before any model inference.  The explicit constants also show how model scale and feature magnitude (\,$M_T, M_\phi,\eta$\,) modulate sensitivity to shift, suggesting that feature normalisation, temperature scaling, or learning-rate clipping can mitigate prompt-shift risk.  Finally, Theorem~\ref{thm:offset_boundary_of_initial_weights} and Theorem~\ref{cor:prompt_shift_risk_gap} together form a complete theoretical account of prompt quality in ICL. 

\section{Discussion and Future Work}
\label{sec:discussion_and_future_work}
In this paper, we introduce a novel perspective by interpreting the In-Context Learning (ICL) process of Transformer-based Large Language Models (LLMs) through the lens of knowledge distillation. By conceptualizing the ICL process as gradient descent on an implicit model, we demonstrate that the weight initialization of this implicit model is equivalent to performing knowledge distillation with demonstration tokens provided to the LLMs. This insight uncovers a critical factor underpinning the generalization ability of ICL for downstream tasks. We further analyze the generalization error bound associated with this distillation process, identify the sources that affect the generalization performance of ICL, and show that our findings are consistent with existing empirical research. Moreover, from a knowledge distillation standpoint, we theoretically prove that ineffective prompts induce a domain shift during distillation, thereby emphasizing the importance of designing prompts that align with the domain-specific knowledge of downstream tasks. However, our approach may also suggest a potential pathway for prompt-based induction in which LLMs might be influenced by incorrect knowledge distillation and consequently generate harmful content. We anticipate that this ICL-KD perspective will provide valuable insights for mitigating harmful content generation in LLMs.

We believe that adopting the knowledge distillation perspective represents a novel and significant research direction, offering another theoretical framework for studying and interpreting ICL. As the architectures and training methodologies of contemporary LLMs grow increasingly sophisticated, future work should aim to extend this ICL-KD framework to theoretical explanations that are more closely aligned with the actual architectures of modern LLMs. Finally, we observe that the Chain-of-Thought(CoT)~\cite{wei2022chain} technique in current LLMs may be interpreted as an intrinsic ICL process originating from the models themselves rather than from externally user-supplied prompts. Future research will explore the relationship between chain-of-thought reasoning and in-context learning, and further expand the ICL-KD theoretical framework. 

\newpage
\bibliographystyle{alpha}
\bibliography{arxiv}

\newpage
\appendix

\section{Related Works}
\label{app:related_works}
Large language models (LLMs) have demonstrated an in-context learning (ICL) capability—that is, the ability to make predictions solely based on a small number of examples provided in the context without parameter updates—which has attracted widespread attention. ICL learns through analogy, and its performance is highly sensitive to prompt design (such as templates, example selection, and ordering). The theoretical understanding of ICL has evolved through multiple perspectives, each contributing to a deeper mechanistic interpretation.

\paragraph{Mechanistic Interpretability.}
Circuit-level analysis first linked ICL to induction heads—attention heads that copy a previously seen token to continue a pattern. Elhage et al.~\cite{elhage2021mathematical} isolated such heads in a single-layer transformer and showed they enable primitive pattern replication. Olsson et al.~\cite{olsson2022context} traced the abrupt emergence of the same circuits across model scales early in training and argued they underlie most ICL behavior in LLMs. Later,~\cite{edelman2024evolution} confirmed that induction-style heads also appear in Markov-chain modeling tasks. Beyond copying, ~\cite{swaminathan2023schema} modelled ICL as template-based pattern completion: a transformer retrieves a learned template, then rebinds novel tokens to finish the sequence. Todd et al.~\cite{todd2024function} identified a small subset of attention heads whose activations act as function vectors that causally mediate the transport of demonstration information between layers. Bai et al.~\cite{bai2023transformers} proved that attention can select among classical algorithms (least-squares, ridge, Lasso) on-the-fly, establishing “in-context algorithm selection” as a general mechanism.

\paragraph{ICL as parametric function learning.}
A second line treats demonstrations as few-shot training data for an implicit regression model. Garg et al.~\cite{garg2022can} showed Transformers learn previously unseen linear, sparse-linear, and even decision-tree functions with error comparable to optimal least-squares estimators. \cite{li2023transformers} bounded ICL generalization error via algorithmic stability of self-attention, while~\cite{li2024closeness} derived an exact correspondence between one softmax-attention layer and a single gradient step in a softmax-kernel regression. Akyürek et al.~\cite{akyurek2023what} found that the type of recovered predictor depends on model depth and noise level—shallow models match gradient-descent estimators, deeper/noisy models converge to ridge regression, and very deep models approach Bayesian estimators. Extending to tasks where the label depends on a latent representation,~\cite{guo2024how} proved moderate-size transformers achieve near-optimal few-shot risk; empirically, in the ICL process, they observed lower layers transform the dataset while upper layers perform a linear regressor on those transformed features.

\paragraph{Meta-optimization by gradient-based updates.}
Some works posit that the forward pass implements an inner-loop optimizer. Dai et al.~\cite{dai2023can} expressed attention in a linearized form and showed it produces meta-gradients equivalent to one step of gradient descent. \cite{von2023transformers} independently showed that language-model pre-training induces an implicit gradient-based mesa-optimizer. However,~\cite{deutch2024context} reported only weak GD–ICL correlations, and~\cite{shen2024position} showed real LLMs exhibit order sensitivities incompatible with plain GD.~\cite{von2023uncovering} confirm mesa-optimization in shallow transformers, while~\cite{fu2024transformers} prove that Transformers can implement $k$-iterations of Newton's method with $k+\mathcal{O}(1)$ layers, hinting at higher-order inner optimizers.

\paragraph{Implicit Bayesian inference.}
Xie et al.~\cite{xie2022an} first cast ICL as Bayesian inference over latent concepts, showing transformers trained on synthetic mixtures infer a shared latent and generate posterior-consistent predictions. Wang et al.~\cite{wang2023large} recast the phenomenon as implicit topic modelling in a causal graph and proved near Bayes-optimality without restrictive sample counts. Besides, certain attention variants were shown to encode Bayesian model averaging when suitably initialized~\cite{zhang2025what}. Panwar et al.~\cite{panwar2024incontext} demonstrated that the Transformer is capable of learning various classes of linear and nonlinear functions within its context, and its behavior is very close to that of an ideal Bayesian predictor.  Bigelow et al.~\cite{bigelow2024incontext} observed phase transitions in generation consistent with Bayesian model selection.

\paragraph{Other Perspective.}
Wies et al.~\cite{wies2023the} introduce a PAC-learning framework to provide the first finite sample complexity results for in-context learning. Jeon et al.~\cite{jeon2024an} proposed an information-theory-based perspective, demonstrating that the error in Transformer models' context learning decays linearly with both the number of training sequences and the sequence length. In-Context Knowledge Editing (IKE)~\cite{zheng2023can} has shown that, through carefully designed prompts, LLMs can be guided to update or correct their internal factual knowledge. This ability to modify the model's internal knowledge via prompts shares interesting similarities with the process of knowledge distillation, where a student model learns and refines the knowledge of a teacher model.

\section{Proof of Lemma~\ref{lem:linear_attention_gd}}
\label{app:lem:linear_attention_gd}
Recall Lemma~\ref{lem:linear_attention_gd}.
\begin{lemma}[Linear Attention as Gradient Descent in \cite{dai2023can}]
Suppose we have a linear layer 
\(f_L(x) = Wx\) with \(W \in\mathbb{R}^{d_o \times d_i}\), and a training set \(\{(x_i,\, y_i)\}_{i=1}^N\) with \(x_i \in \mathbb{R}^{d_i},\, y_i \in \mathbb{R}^{d_o}\).
Consider a gradient descent update on \(W\) using the predicted outputs \(\hat{y}_i = Wx_i\) and some differentiable loss \(\mathcal{L}(y_i, \hat{y}_i)\).
After a single gradient step with learning rate $\eta$, the updated weight matrix \(\widehat{W}\) can be decomposed into its initialization plus a training time correction as \(\widehat{W}=W_0 + \Delta W\) where \(\Delta W = -\eta\frac{\partial \mathcal{L}}{\partial W}\). Under certain mild conditions on the loss, the forward propagation process of a new test input is \emph{equivalent} to a linear-attention forward pass $\mathrm{LA}(V, K, q)$.
\end{lemma}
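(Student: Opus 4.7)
The plan is to reduce everything to the outer‑product structure of the gradient on a linear head and then read off the linear‑attention identity by matching factors. The starting point is the update $\widehat W = W_0 + \Delta W$ with $\Delta W = -\eta\,\partial\mathcal L/\partial W$. For any loss that decomposes across examples via the chain rule, i.e.\ $\mathcal L(\{y_i,\hat y_i\})=\sum_i \ell(y_i,\hat y_i)$ with $\hat y_i = W x_i$, the gradient takes the clean form
\[
  \frac{\partial \mathcal L}{\partial W} \;=\; \sum_{i=1}^N \frac{\partial \ell}{\partial \hat y_i}\, x_i^\top,
\]
so that, writing $e_i := -\partial\ell/\partial \hat y_i \in\mathbb R^{d_o}$, the update is the sum of rank‑one outer products $\Delta W = \eta\sum_i e_i x_i^\top$. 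This is the only place where the ``certain mild conditions on the loss'' enter: we just need differentiability and the per‑example decomposition so that $\partial\hat y_i/\partial W$ gives a clean outer product.

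Next I would plug $\widehat W$ into the forward pass on a fresh test input $q := x_{\text{test}}$ and commute the scalar inner product out:
\[
  \widehat W\, q \;=\; W_0\, q \;+\; \eta\sum_{i=1}^{N} e_i\,(x_i^\top q).
\]
The second term is exactly the unnormalised linear‑attention output with value matrix $V=[e_1,\dots,e_N]\in\mathbb R^{d_o\times N}$, key matrix $K=[x_1,\dots,x_N]\in\mathbb R^{d_i\times N}$, and query $q$, since $V K^\top q = \sum_i e_i (x_i^\top q)$. Absorbing $\eta$ into $V$ (or equivalently into the attention scale) gives the desired equivalence
\[
  \widehat W\, q \;=\; W_0\, q \;+\; \mathrm{LA}(V,K,q).
\]
The residual term $W_0 q$ is the ``zero‑shot'' base prediction of the linear head, matching the standard residual‑stream view of attention.

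The main obstacle — really the only subtle point — is making the identification of $V, K, q$ canonical: one has to choose the loss so that the per‑example error signal $e_i$ lives in the same output space as the attention values, which is why the lemma is phrased for a linear predictor with matching dimensions. I would conclude by remarking that taking $\ell$ to be the squared loss yields $e_i = y_i - Wx_i$, recovering the familiar delta‑rule form, and that the argument works unchanged for any loss whose sample gradient factors through $\partial\ell/\partial\hat y_i$. Everything else is routine algebra and is the portion I would defer to Appendix~\ref{app:lem:linear_attention_gd}.
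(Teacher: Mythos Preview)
Your proposal is correct and follows essentially the same approach as the paper: compute the gradient of the linear head via the chain rule to expose the outer-product structure $\Delta W=\sum_i e_i x_i^\top$, apply the updated weight to a test query, and identify the correction term with $\mathrm{LA}(V,K,q)=V K^\top q$ under $V=[e_1,\dots,e_N]$, $K=[x_1,\dots,x_N]$. The only cosmetic difference is that the paper absorbs $\eta$ into the definition of $e_i$ from the start rather than into $V$ at the end.
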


\begin{proof}[Sketch of Proof]
\textbf{(1) Gradient Update on the Linear Layer.}
Let $W_0$ be the initial weight matrix. For each training example $(x_i, y_i)$, the model output is $\hat{y}_i = W_0\,x_i$, and the loss is $\mathcal{L}(y_i, \hat{y}_i)$. The backpropagation process 
\(\widehat{W} \leftarrow W_0 + \Delta W\) can be calculated as
\[\widehat{W}=W_0 + \Delta W=W_0-\eta\frac{\partial \mathcal{L}}{\partial W}=W_0-\eta\frac{\partial\mathcal{L}}{\partial \hat{y}_i}\cdot\frac{\partial\hat{y}_i}{\partial W}.\]
Let the backpropagation signal $e_i \in \mathbb{R}^{d_o}$ is
\[
  e_i \;=\; -\eta \,\nabla_{\hat{y}_i}\,\mathcal{L}\!\bigl(y_i,\,W_0 x_i\bigr).
\]
By the chain rule of differentiation, 
\[
  \Delta W 
  \;=\;
  \sum_{i=1}^{N}
    e_i \,\otimes\, x_i,
\]
where $\otimes$ denotes the outer product. $e_i$ can be interpreted as the \emph{optimization signal} induced by the training sample $x_i$ on model $f_L$. Hence, after one gradient step:
\[
  \widehat{W}
  \;=\;
  W_0 + \sum_{i=1}^{N} e_i \otimes x_i.
\]

\noindent
\textbf{(2) Inference (Test) Stage.}
Consider a new query $x_{\mathrm{test}} \in \mathbb{R}^{d_i}$. The model output under the updated weight $\widehat{W}$ is:
\[
  f_L(x_{\mathrm{test}})=\widehat{W}\,x_{\mathrm{test}}=W_0\,x_{\mathrm{test}}+\bigl(\sum_{i=1}^{N} e_i \otimes x_i\bigr)x_{\mathrm{test}}.
\]

\noindent
\textbf{(3) Connection to Linear Attention.}
Recall that \emph{linear attention} takes a set of keys \(K = [k_1, \dots, k_N] \in \mathbb{R}^{d_i \times N}\) and values 
\(V = [v_1, \dots, v_N] \in \mathbb{R}^{d_o \times N}\).
Then, for a query $q \in \mathbb{R}^{d_i}$, outputs:
\[
  \mathrm{LA}(V, K, q)
  =V\bigl(K^\top q\bigr)
  =\bigl(\sum_{i=1}^{N}v_i \otimes k_i \bigr) q.
\]
Identifying $v_i$ with $e_i$ and $k_i$ with $x_i$, we see that
\(
  \mathrm{LA}(V,K,q)
  =
  \bigl(\sum_{i=1}^{N}v_i \otimes k_i \bigr) q,
\) matches the extra term
\(
  \bigl(\sum_{i=1}^{N} e_i \otimes x_i\bigr)x_{\mathrm{test}}
\)
In the gradient-updated linear predictor. Thus, the output under $\widehat{W}$ for a query $x_{\mathrm{test}}$ is
\[
  f_L(x_{\mathrm{test}})
  \;=\;
  W_0\,x_{\mathrm{test}} 
  \;+\;
  \mathrm{LA}(E,\,X,\,x_{\mathrm{test}}),
\]
where $E = [\,e_1,\dots,e_N\,]$ and $X = [\,x_1,\dots,x_N\,]$. This is precisely the same form as a linear attention mechanism applied to query $x_{\mathrm{test}}$ and $(K, V) = (X, E)$. This is equivalent to constructing a key-value pair where the training sample $x_i$ serves as the key, and the corresponding value is the optimization signal $e_i$ provided by this sample to the training process.

Therefore, \emph{one gradient update step} on a linear layer $W_0$ yields a test prediction identical to the \emph{linear-attention} forward pass, establishing the dual relationship between linear attention and gradient descent on a linear predictor.
\end{proof}

\section{Proof of Lemma~\ref{lem:softmax_attention_gd}}
\label{app:lem:softmax_attention_gd}
Recall the Lemma~\ref{lem:softmax_attention_gd}.
\begin{lemma}[Softmax Attention as Gradient Descent (Theorem 3.1 in \cite{ren2024towards})]

Let $X = [X_D,\;X_Q] \in \mathbb{R}^{d \times (N + M)}$ be a sequence of $N$ demonstration tokens $X_D \in \mathbb{R}^{d \times N}$ and $M$ query tokens $X_Q \in \mathbb{R}^{d \times M}$. 
Consider a single softmax-attention layer with trained parameters $(W^Q, W^K, W^V)$.

For a new query token $x'_{M+1}$, the softmax-attention output
\[
h'_{\mathrm{M+1}}
\;=\;
W^VX\;\mathrm{softmax}\!\Bigl(\tfrac{(W^KX)^\top (W^Q\,x'_{\mathrm{M+1}})}{\sqrt{d}}\Bigr)
\;\in\;\mathbb{R}^d
\]
is equivalent to the test prediction $\hat{y}_{test}$ of the input $\phi(W^Qx'_{M+1})$ after one step of gradient descent on a reference model
\(
  f(x) \;=\; W\,\phi(x),
\)
where $\phi(\cdot)$ is a feature mapping function in the softmax kernel $K_{softmax}(x,y)=e^{x^\top y}$. More precisely, $h'_{\mathrm{M+1}}$ matches $f_{\mathrm{updated}}\!\bigl(W^Q\,x'_{\mathrm{M+1}}\bigr)$ after a single update on a self-supervised loss 
\[
\mathcal{L}=-\frac{1}{\eta D}\sum_{i=1}^N(W^Vx_i)^\top W\phi(W^Kx_i),
\]
where $\eta$ is learning rate and $D'$ is a constant.
\end{lemma}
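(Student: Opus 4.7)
The plan is to turn the five-step sketch in the main text into a complete proof by expanding each manipulation into a verifiable equality. I would begin by decomposing the softmax into an elementwise exponential followed by a column-wise normalization, writing $A = A_u D^{-1}$ with $A_u = \exp((W^K X)^\top (W^Q x'_{M+1})/\sqrt{d})$ and $D = \mathrm{diag}(\mathbf{1}_N^\top A_u)$. This isolates the kernel structure in $A_u$ and leaves the scalar normalizer to be absorbed later into a learning rate. I would then invoke Mercer's theorem on the softmax kernel $K_{\mathrm{softmax}}(x,y) = \exp(x^\top y)$, which factors as a positive-definite exponential-quadratic times a Gaussian, to produce a (possibly infinite-dimensional) feature map $\phi$ satisfying $\exp(x^\top y) = \phi(x)^\top \phi(y)$. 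After absorbing $1/\sqrt{d}$ into rescaled $\widetilde W^K$ and $\widetilde W^Q$, every entry of $A_u$ becomes an inner product, and the entire attention computation reduces to a product of feature matrices.

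Next I would introduce the reference model $f(x) = W\phi(x)$ together with the self-supervised loss $\mathcal{L}(W) = -\tfrac{1}{\eta D'}\sum_{i=1}^N (W^V x_i)^\top W \phi(W^K x_i)$. Computing $\nabla_W \mathcal{L}$ by linearity yields the outer-product form $\nabla_W \mathcal{L} = -\tfrac{1}{\eta D'}\sum_i (W^V x_i)\phi(W^K x_i)^\top$, so a single gradient step from an initial $W_0$ produces $\widehat W = W_0 + \tfrac{1}{D'}\sum_i (W^V x_i)\phi(W^K x_i)^\top$. Evaluating $f_{\widehat W}$ at $W^Q x'_{M+1}$ then gives the candidate prediction $\hat y_{\mathrm{test}} = W_0\,\phi(W^Q x'_{M+1}) + \tfrac{1}{D'}\sum_i (W^V x_i)\phi(W^K x_i)^\top \phi(W^Q x'_{M+1})$.

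The final step is to split the softmax-attention output according to the partition $X = [X_D, X_Q]$ into two summands which, by the feature-map identity, have exactly the same algebraic form as the two terms of $\hat y_{\mathrm{test}}$. Matching coefficients — identifying the $X_Q$ contribution with the initial term $W_0 \phi(W^Q x'_{M+1})$ and the $X_D$ contribution with the gradient update, and equating the normalizers $D = D'$ — establishes $h'_{M+1} = \hat y_{\mathrm{test}}$ exactly.

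The main technical obstacle is the Mercer/feature-map step: the feature map for the softmax kernel is genuinely infinite-dimensional, so equalities such as $\exp(x^\top y) = \phi(x)^\top \phi(y)$ must be read as identities in the associated RKHS rather than as finite linear-algebra computations, and I would not attempt to construct $\phi$ explicitly — only its existence and the reproducing property are needed. A secondary subtlety is the bookkeeping of the scalar $D'$, which depends jointly on the query and on all $N+M$ context tokens; it must be threaded simultaneously through the attention normalizer and the learning-rate definition, so that the identification $D = D'$ made at the very end is internally consistent with the earlier rescaling. Once these two points are handled with care, the remaining algebra is a mechanical matching of two expressions built from the same outer products.
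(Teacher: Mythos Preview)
Your proposal is correct and follows essentially the same approach as the paper's proof: the five steps you outline (softmax $\to$ normalized exponential, Mercer factorization of the softmax kernel, definition of the reference model and linear loss, one-step gradient computation, and block-splitting of $X=[X_D,X_Q]$ to match terms) coincide with the paper's argument, including the identification of the $X_Q$ block with $W_0$ and the $X_D$ block with the gradient correction. Your remarks on the infinite-dimensional feature map and on threading the normalizer $D'$ through both the attention and the learning rate are exactly the two places where the paper's derivation is informal, and the paper handles them no more rigorously than you propose to.
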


\begin{proof}[Sketch of Proof]
\textbf{(1) Split Softmax Attention Operation.}
Recall that for a new query token $x'_{M+1}$, the softmax-attention output $h'_{\mathrm{M+1}}=W^VX\;\mathrm{softmax}\!\Bigl(\tfrac{(W^KX)^\top (W^Q\,x'_{\mathrm{M+1}})}{\sqrt{d}}\Bigr)$. Denote the core part of the softmax attention as
$A=\mathrm{softmax}\!\Bigl(\tfrac{(W^KX)^\top (W^Q\,x'_{\mathrm{M+1}})}{\sqrt{d}}\Bigr)$. Consider the \emph{Softmax} operation as \emph{Column Normalization} of a matrix, then have
\begin{align}
\label{app:eq:A_u}
A=A_uD^{-1}, \quad \text{where}\quad A_u=\exp\!\Bigl(\tfrac{(W^KX)^\top (W^Q\,x'_{\mathrm{M+1}})}{\sqrt{d}}\Bigr)\quad \text{and} \quad D=\text{diag}(\mathbf{1}_N^\top A_u).
\end{align}

\noindent
\textbf{(2) Kernel Approximation.}
\cite{ren2024towards} introduced an auxiliary kernel function called softmax kernel $K_{softmax}(x,y)=\exp(x^\top y)$. Notice that
\begin{align}
\label{app:eq:softmax_kernel}
K_{softmax}(x,y)=\exp(x^\top y)=\underbrace{\exp(||x||^2+||y||^2) }_{\text{a positive definite kernel}} \cdot \underbrace{\exp(-||x-y||^2)}_{\text{Gaussian Kernel}}
\end{align}
is also a positive definite kernel. Hence, according to Mercer's theorem~\cite{mercer1909xvi}, there exist some mapping function $\phi(\cdot)\colon \mathbb{R}^d \to \mathbb{R}^{r}$ satisfying $K_{softmax}=\phi(x)^\top \phi(y)$
to project the $\exp(x^\top y)$ into an inner product in a higher-dimensional Hilbert Space. If omitting the $\sqrt{d}$ and combine Eq.~\eqref{app:eq:A_u} with Eq.~\eqref{app:eq:softmax_kernel}, then we can have
\begin{align}
A_u=\exp\bigl((W^KX)^\top (W^Q\,x'_{\mathrm{M+1}})\bigr)={\phi(W^KX)}^\top\phi(W^Q\,x'_{\mathrm{M+1}})
\end{align}
Hence, the operations between tokens in softmax attention are equivalent to an inner product in a higher-dimensional space.

\noindent
\textbf{(3) Reference Model and Loss.}
Define a reference model
$f(x) \;=\; W \,\phi(x)$ with a self-supervised loss function 
\begin{align}
\mathcal{L}=-\frac{1}{\eta D}\sum_{i=1}^N(W^Vx_i)^\top W\phi(W^Kx_i), 
\end{align}
where $\eta$ is learning rate and $D'$ is a constant. 

\noindent
\textbf{(4) One-Step Gradient Update.} 
Starting from an initialization $W_0$, a single gradient descent step on $\mathcal{L}(W)$ is
\(\widehat{W}=W_0 + \Delta W=W_0-\eta\frac{\partial \mathcal{L}}{\partial W}\).
For an input query token $x'_{M+1}$, let the input of $f$ is $W^Qx'_{M+1}$, then we have
\begin{equation}
\begin{aligned}
\label{app:eq:hat_y}
\hat{y}_{test} = f(W^Qx'_{M+1}) = &\;\widehat{W}\phi(W^Qx'_{M+1}) \\=
&\;W_0\phi(W^Qx'_{M+1})-\eta\frac{\partial \mathcal{L}}{\partial W}\phi(W^Qx'_{M+1})\\
=&\;W_0\phi(W^Qx'_{M+1})+\frac{1}{D}\sum_{i=1}^N(W^Vx_i)^\top\phi(W_Kx_i)\phi(W^Qx'_{M+1})
\end{aligned}
\end{equation}

\noindent
\textbf{(5) Identifying the Softmax-Attention Output.} The output of a Softmax Attention Layer is $h'_{\mathrm{M+1}}=
W^VX\;\mathrm{softmax}\!\Bigl(\tfrac{(W^KX)^\top (W^Q\,x'_{\mathrm{M+1}})}{\sqrt{d}}\Bigr)$. Let $\widetilde{W}^K=\tfrac{W^K}{\sqrt{d}}$ and $\widetilde{W}^Q=\tfrac{W^Q}{\sqrt{d}}$, we have
\begin{equation}
\begin{aligned}
h'_{M+1} =  W^VX\;\text{softmax}\bigl((\widetilde{W}^KX)^\top\widetilde{W}^Qx'_{M+1}\bigr)=&\;W^VX\;[\exp\bigl((\widetilde{W}^KX)^\top\widetilde{W}^Qx'_{M+1}\bigr)\cdot D^{-1}] \\
=&\;W^VX\;\phi(\widetilde{W}^KX)^\top\phi(\widetilde{W}^Qx'_{M+1})\cdot D^{-1},
\end{aligned}
\end{equation}
where $D^{-1}=\text{diag}\bigl(\mathbf{1}_N^\top\;\phi(\widetilde{W}^KX)^\top\phi(\widetilde{W}^Qx'_{M+1})\bigr)$. Consider an ICL process, let $X=[X_D,X_Q]$ and put all coefficient together as $D'$, then we have
\begin{equation}
\begin{aligned}
\label{app:eq:h'_{M+1}}
h'_{M+1} =&\; \frac{1}{D'}W^V[X_D,X_Q]\phi(W_K[X_D,X_Q])^\top \phi(W^Qx'_{M+1})\\
=&\;\frac{1}{D'}[W^VX_D,W^VX_Q][\phi(W^KX_D),\phi(W^KX_Q)]^\top \phi(W^Qx'_{M+1})\\
=&\;\underbrace{\frac{1}{D'}W^VX_Q\phi(W^KX_Q)^\top}_{W_0}\phi(W^Qx'_{M+1})+\underbrace{\frac{1}{D'}W^VX_D\phi(W^KX_D)^\top}_{-\eta\frac{\partial \mathcal{L}}{\partial W}}\phi(W^Qx'_{M+1}).
\end{aligned}
\end{equation}

Compare Eq.~\eqref{app:eq:hat_y} and Eq.~\eqref{app:eq:h'_{M+1}}, if we let $D=D'$ and consider that 
the embeddings of query tokens $X_Q$ as the initial weight $W_0$ and demonstration tokens provide the optimization signal, then we have
\(\hat{y}_{test}=h'_{M+1}\).

Hence, the ICL query output under softmax attention is \emph{exactly} the result of one gradient step in the reference model space, confirming the ICL--GD equivalence for softmax attention.
\end{proof}

\section{Proof of Lemma~\ref{lemma:talagrand}}
\label{app:talagrand}
Recall the Lemma~\ref{lemma:talagrand}.
\begin{lemma}[Talagrand's Contraction Lemma \cite{ledoux2013probability,bartlett2002rademacher}]
\label{lemma:app:talagrand}
Let $\mathcal{F}\subseteq \mathbb{R}^\mathcal{X}$ be a class of real-valued functions, and let $\psi:\mathbb{R}\to\mathbb{R}$ be a Lipschitz-continuous function with $\psi(0)=0$ and Lipschitz constant $L$, i.e., $|\psi(a)-\psi(b)| \le L \cdot |a-b|$ for all $a,b\in \mathbb{R}$. Then the Rademacher complexity of the composed class $\psi \circ \mathcal{F}$ satisfies
\[
  \mathcal{R}_N(\psi \circ \mathcal{F}) \; \le \;L \,\mathcal{R}_N(\mathcal{F}).
\]
\end{lemma}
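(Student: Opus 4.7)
}
The plan is to use the classical \emph{peeling} (or \emph{desymmetrization-and-symmetrize}) argument, stripping off one Rademacher variable at a time. Starting from the definition
\[
  \mathcal{R}_N(\psi\circ\mathcal{F})
  \;=\;
  \mathbb{E}_{\sigma}\!\left[\sup_{f\in\mathcal{F}}\,\frac{1}{N}\sum_{i=1}^{N}\sigma_{i}\,\psi(f(x_{i}))\right],
\]
I would fix all $\sigma_j$ with $j\neq i$ for some index $i$, take the conditional expectation over $\sigma_i\in\{-1,+1\}$, and exploit the identity $\mathbb{E}_{\sigma_i}[\sup_f(\sigma_i U(f)+V(f))]=\tfrac12\sup_{f,g}[U(f)-U(g)+V(f)+V(g)]$, where $U(f)=\psi(f(x_i))$ and $V(f)$ collects the remaining terms.

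The crux (and the main obstacle) is the \textbf{replacement step}: one must argue that inside the sup over pairs $(f,g)$ the difference $\psi(f(x_i))-\psi(g(x_i))$ can be replaced by $L\bigl(f(x_i)-g(x_i)\bigr)$. I would do this by noting that $\psi(f(x_i))-\psi(g(x_i))\le L|f(x_i)-g(x_i)|$ by the Lipschitz hypothesis, and that $V(f)+V(g)$ is symmetric in $(f,g)$; hence swapping $f\leftrightarrow g$ preserves the rest of the expression while flipping the sign of $f(x_i)-g(x_i)$, so the sup of $L(f(x_i)-g(x_i))+V(f)+V(g)$ equals the sup of $L|f(x_i)-g(x_i)|+V(f)+V(g)$, which dominates the original. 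This is the only place where care is needed; the hypothesis $\psi(0)=0$ is not actually used for this inequality version of the contraction lemma, but including it does no harm.

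Having made the replacement at coordinate $i$, I would then \emph{re-symmetrize}: the right-hand side $\tfrac12\sup_{f,g}[L(f(x_i)-g(x_i))+V(f)+V(g)]$ equals $\mathbb{E}_{\sigma_i}[\sup_f(L\sigma_i f(x_i)+V(f))]$, restoring the Rademacher form but with $L\sigma_i f(x_i)$ in place of $\sigma_i\psi(f(x_i))$. Iterating this procedure for $i=1,\dots,N$ turns every $\psi(f(x_i))$ inside the supremum into $L f(x_i)$, and pulling the constant $L$ outside the $\sup$ and expectation yields
\[
  \mathcal{R}_N(\psi\circ\mathcal{F})
  \;\le\;
  L\,\mathbb{E}_{\sigma}\!\left[\sup_{f\in\mathcal{F}}\frac{1}{N}\sum_{i=1}^{N}\sigma_{i}f(x_{i})\right]
  \;=\;
  L\,\mathcal{R}_N(\mathcal{F}),
\]
completing the argument. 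The induction is the only nontrivial ingredient; everything else is bookkeeping.
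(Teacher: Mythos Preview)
Your proposal is correct and follows essentially the same peeling argument as the paper's proof: isolate one Rademacher coordinate, split the expectation into the $\sigma_i=\pm1$ cases, use the Lipschitz property together with the $(f,g)$ symmetry to replace $\psi(f(x_i))-\psi(g(x_i))$ by $L(f(x_i)-g(x_i))$, re-symmetrize, and iterate. The only cosmetic difference is that the paper introduces $\epsilon$-approximate maximizers $f_1,f_2$ to handle possibly unattained suprema, whereas you work directly with the $\sup$ over pairs; the underlying mechanism is identical.
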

\begin{proof}
Fix the sample $N= (x_1, x_2, \ldots, x_m)$, and define:
\[
\mathcal{R}_N(\psi \circ  \mathcal{F}) = \frac{1}{m} \underset{\sigma}{\mathbb{E}} \left[ \sup_{f \in \mathcal{F}} \sum_{i=1}^m \sigma_i (\psi \circ \mathcal{F})(x_i) \right]
\]
\[
= \frac{1}{m} \underset{\sigma_1, \ldots, \sigma_{m-1}}{\mathbb{E}} \left[ \underset{\sigma_m}{\mathbb{E} } \left[ \sup_{f \in \mathcal{F}} U_{m-1}(f) + \sigma_m (\psi \circ f)(x_m) \right] \right]
\]
where $U_{m-1}(h) = \sum_{i=1}^{m-1} \sigma_i (\psi \circ \mathcal{F})(x_i)$.

By the definition of supremum (the least upper bound), for any $\epsilon > 0$, there exist $f_1, \mathcal{F}-2 \in \mathcal{F}$ such that:
\[
U_{m-1}(f_1) + (\psi \circ f_1)(x_m) \geq (1 - \epsilon) \left[ \sup_{f \in \mathcal{F}} U_{m-1}(f) + (\psi \circ f)(x_m) \right]
\]
\[
U_{m-1}(f_2) - (\psi \circ f_2)(x_m) \geq (1 - \epsilon) \left[ \sup_{f \in \mathcal{F}} U_{m-1}(f) - (\psi \circ f)(x_m) \right]
\]

Thus, for any $\epsilon > 0$, by the definition of $\mathbb{E}_{\sigma_m}$ we have:
\[
(1 - \epsilon) \mathbb{E}_{\sigma_m} \left[ \sup_{f \in \mathcal{F}} U_{m-1}(h) + \sigma_m (\psi \circ f)(x_m) \right]
\]
\[
= (1 - \epsilon) \left[ \frac{1}{2} \sup_{f \in \mathcal{F}} U_{m-1}(f) + \sigma_m (\psi \circ f)(x_m) + \frac{1}{2} \sup_{f \in \mathcal{F}} U_{m-1}(f) - (\psi \circ f)(x_m) \right]
\]
\[
\leq \frac{1}{2} \left[ U_{m-1}(f_1) + (\psi \circ f_1)(x_m) \right] + \frac{1}{2} \left[ U_{m-1}(f_2) - (\psi \circ f_2)(x_m) \right]
\]

Let $n = \text{sgn}(f_1(x_m) - f_2(x_m))$, then by the $L$-Lipschitz property of $\psi$, we get:
\[
|(\psi \circ f_1)(x_m) - (\psi \circ f_2)(x_m)| \leq L |f_1(x_m) - f_2(x_m)|
\]
\[
= n L (f_1(x_m) - f_2(x_m))
\]

Thus, the inequality can be further expanded:
\[
(1 - \epsilon) \mathbb{E}_{\sigma_m} \left[ \sup_{f \in \mathcal{F}} U_{m-1}(f) + \sigma_m (\psi \circ f)(x_m) \right]
\]
\[
\leq \frac{1}{2} \left[ U_{m-1}(f_1) + U_{m-1}(f_2) + n L (f_1(x_m) - f_2(x_m)) \right]
\]
\[
= \frac{1}{2} \left[ U_{m-1}(f_1) + n L f_1(x_m) + U_{m-1}(f_2) - n L f_2(x_m) \right]
\]
\[
\leq \frac{1}{2} \sup_{f \in \mathcal{F}} \left[ U_{m-1}(f) + n L f(x_m) \right] + \frac{1}{2} \sup_{f \in \mathcal{F}} \left[ U_{m-1}(f) - nL f(x_m) \right]
\]
\[
=\underset{\sigma_m}{ \mathbb{E}} \left[ \sup_{f \in \mathcal{F}} U_{m-1}(f) + \sigma_m L f(x_m) \right]
\]

Since the above inequality holds for all $\epsilon > 0$, it must hold that:
\[
\underset{\sigma_m}{\mathbb{E}} \left[ \sup_{f \in \mathcal{F}} U_{m-1}(f) + \sigma_m (\psi \circ f)(x_m) \right] \leq \underset{\sigma_m}{\mathbb{E}}  \left[ \sup_{f \in \mathcal{F}} U_{m-1}(f) + \sigma_m L f(x_m) \right]
\]

For all $i = 1, \ldots, m-1$, using the above inequality, we get:
\[
\frac{1}{m} \underset{\sigma_1, \ldots, \sigma_m}{\mathbb{E}} \left[ \sup_{f \in \mathcal{F}} \sum_{i=1}^m \sigma_i (\psi \circ f)(x_i) \right]
\]
\[
\leq \frac{1}{m} \underset{\sigma_1, \ldots, \sigma_{m-1}}{\mathbb{E}} \left[ \underset{ \sigma_m}{\mathbb{E}} \left[ \sup_{f \in \mathcal{F}} U_{m-1}(f) + \sigma_m L f(x_m) \right] \right]
\]
\[
\leq \frac{1}{m} \underset{\sigma_1, \ldots, \sigma_{m-2}}{\mathbb{E}} \left[ \underset{\sigma_{m-1},\sigma_m}{\mathbb{E}} \left[ \sup_{f \in \mathcal{F}} U_{m-2}(f) + \sigma_{m-1} L f(x_{m-1}) + \sigma_m L f(x_m) \right] \right]
\]
\[
\vdots
\]
\[
\leq \frac{1}{m} \underset{\sigma_1, \ldots, \sigma_m}{\mathbb{E}} \left[ \sup_{f \in \mathcal{F}} \sigma_1 L f(x_1) + \sigma_2 L f(x_2) + \ldots + \sigma_m L f(x_m) \right]
\]
\[
= L \mathcal{R}_N(\mathcal{F})
\]
\end{proof}

\section{Proof of Lemma~\ref{lemma:rademacher-linear}}
\label{app:rademacher-linear}
Recall the Lemma~\ref{lemma:rademacher-linear}.
\begin{lemma}[Rademacher Complexity of Linear Operators \cite{bartlett2002rademacher}]
Consider the function class
\[
  \mathcal{H} \;=\; \Bigl\{\; h(W,x) \;=\; W\,\phi(x)\;\Big|\;\|W\|_F \,\le\, B\Bigr\},
\]
where $\phi:\mathcal{X}\to \mathbb{R}^r$ satisfies $\|\phi(x)\|\;\le\;C$ for all $x$. Then its Rademacher complexity is bounded by
\[
  \mathcal{R}_N(\mathcal{H}) \;\le\; \frac{B\,C}{\sqrt{N}}.
\]
\end{lemma}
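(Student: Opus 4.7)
The plan is to execute the classical dual-norm argument that is standard for bounding the Rademacher complexity of linear predictors. First I would unpack the definition, writing
\[
\mathcal{R}_N(\mathcal{H})
\;=\;\frac{1}{N}\,\mathbb{E}_{\sigma}\!\left[\sup_{\|W\|_F \le B}\;\sum_{i=1}^{N}\sigma_i\,\langle W,\phi(x_i)\rangle\right],
\]
treating $W\phi(x_i)$ as a Frobenius inner product (with an auxiliary Rademacher vector on the output side if one wants to handle the vector-valued case carefully). Pulling $W$ out of the sum gives $\sum_i \sigma_i\langle W,\phi(x_i)\rangle = \langle W,\sum_i\sigma_i\phi(x_i)\rangle$, which is the key algebraic move.

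Next I would apply Cauchy--Schwarz in the Frobenius inner product and then take the supremum over the ball $\{\|W\|_F\le B\}$, which is saturated by the normalized choice $W\propto \sum_i\sigma_i\phi(x_i)$. This converts the supremum into a pure norm:
\[
\sup_{\|W\|_F \le B}\,\Bigl\langle W,\sum_{i=1}^{N}\sigma_i\phi(x_i)\Bigr\rangle
\;=\;B\,\Bigl\|\sum_{i=1}^{N}\sigma_i\phi(x_i)\Bigr\|.
\]
The Rademacher complexity is thereby reduced to controlling $\mathbb{E}_{\sigma}\|\sum_i\sigma_i\phi(x_i)\|$.

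Then I would invoke Jensen's inequality, $\mathbb{E}\|Z\|\le\sqrt{\mathbb{E}\|Z\|^2}$, and expand the squared norm. Using independence and $\mathbb{E}[\sigma_i\sigma_j]=\delta_{ij}$, the cross terms vanish, leaving
\[
\mathbb{E}_{\sigma}\Bigl\|\sum_{i=1}^{N}\sigma_i\phi(x_i)\Bigr\|^{2}
\;=\;\sum_{i=1}^{N}\|\phi(x_i)\|^{2}\;\le\;N\,C^{2}.
\]
Combining the three steps yields $\mathcal{R}_N(\mathcal{H})\le \tfrac{1}{N}\cdot B\cdot \sqrt{NC^2}=\tfrac{BC}{\sqrt{N}}$, which is exactly the claimed bound.

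There is no genuine obstacle here; the computation is textbook and each step is elementary. The only minor bookkeeping issue is that $W\phi(x)$ is vector-valued, so one must either define $\mathcal{R}_N$ via the matrix Frobenius inner product (so the dual object naturally has unit Frobenius norm) or introduce auxiliary per-coordinate Rademacher variables and reduce to the scalar case row by row; either route leads to the same final constant $BC/\sqrt{N}$. I would choose the Frobenius-inner-product formulation for cleanliness and flag this convention at the top of the proof.
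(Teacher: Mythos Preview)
Your proposal is correct and follows essentially the same route as the paper's proof: rewrite the supremum over $\|W\|_F\le B$ via the Frobenius dual norm to get $B\|\sum_i\sigma_i\phi(x_i)\|$, apply Jensen to pass to the second moment, and use $\mathbb{E}[\sigma_i\sigma_j]=\delta_{ij}$ together with $\|\phi(x_i)\|\le C$ to bound it by $NC^2$. Your remark about the vector-valued bookkeeping and choosing the Frobenius inner-product convention is exactly how the paper handles it as well.
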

\begin{proof}

Let $S=\{x_i\}_{i=1}^{N}$ be an i.i.d.\ sample drawn from $\mathcal{D}$, and let
$\sigma_1,\dots,\sigma_N\!\in\!\{-1,+1\}$ be independent Rademacher variables,
independent of $S$.
By definition,
\[
  \mathcal{R}_{N}(\mathcal{H})
  \;=\;
  \mathbb{E}_{S,\boldsymbol{\sigma}}
  \Bigl[
    \sup_{\|W\|_F\le B}
    \frac1N
    \sum_{i=1}^{N}
    \sigma_i\,W\,\phi(x_i)
  \Bigr].
\]

\paragraph{Step 1. Dual formulation.}
Write $v_S=\sum_{i=1}^{N}\sigma_i\phi(x_i)\in\mathbb{R}^{r}$.
Since $\langle W,v_S\rangle_F \!=\! \operatorname{tr}\!\bigl(W^\top v_S\bigr)$ and
$\sup_{\|W\|_F\le B}\!\langle W,v_S\rangle_F
   = B\,\|v_S\|$ (by Cauchy–Schwarz in Frobenius norm),  
\[
  \sup_{\|W\|_F\le B}
  \frac1N
  \sum_{i=1}^{N}\sigma_i\,W\,\phi(x_i)
  \;=\;
  \frac{B}{N}\,\|v_S\|.
\]

\paragraph{Step 2. Bring the expectation outside.}
\[
  \mathcal{R}_{N}(\mathcal{H})
  \;\le\;
  \frac{B}{N}\,
  \mathbb{E}_{S,\boldsymbol{\sigma}}\bigl[\|v_S\|\bigr].
\]

\paragraph{Step 3. Apply Jensen's inequality.}
\[
  \mathbb{E}\bigl[\|v_S\|\bigr]
  \;\le\;
  \sqrt{\mathbb{E}\bigl[\|v_S\|^{2}\bigr]}.
\]

\paragraph{Step 4. Expand the second moment.}
\begin{align*}
\mathbb{E}\bigl[\|v_S\|^{2}\bigr]
  &=
    \mathbb{E}_{S,\boldsymbol{\sigma}}
    \Bigl[
      \sum_{i,j=1}^{N}\sigma_i\sigma_j
      \,\langle \phi(x_i),\phi(x_j)\rangle
    \Bigr]                                           \\
  &=
    \sum_{i=1}^{N}
    \mathbb{E}_S\bigl[\|\phi(x_i)\|^{2}\bigr]
    \qquad(\text{since } \mathbb{E}[\sigma_i\sigma_j]=\delta_{ij})\\
  &\le
    N\,C^{2},
\end{align*}
because $\|\phi(x)\|\le C$ implies
$\mathbb{E}\|\phi(x)\|^{2}\le C^{2}$.

\paragraph{Step 5. Combine the bounds.}
\[
  \mathcal{R}_{N}(\mathcal{H})
  \;\le\;
  \frac{B}{N}\,
  \sqrt{N\,C^{2}}
  \;=\;
  \frac{B\,C}{\sqrt{N}}.
\]

\end{proof}

\section{Complete Proof of Theorem~\ref{thm:generalization-bound}}
\label{app:thm:generalization-bound}
Recall Theorem~\ref{thm:generalization-bound}.
\begin{theorem}[Generalization Bound for Implicit Knowledge Distillation]
Suppose the following conditions hold:
\begin{itemize}
    \item The teacher model is given by $f_T(x) \;=\; W^V x$ with $\|W^V x\| \,\le\, D$ for all $x$.
    \item The student model is $f_S(x;W) = W\,\phi\bigl(W^K x\bigr)$, where $\|W\|_F \,\le\, B$ and $\|\phi(W^K x)\| \,\le\, C$ for all $x$.  Here, $W^K$ is the fixed weight from LLM.
    \item The demonstration samples $\{x_i\}_{i=1}^N$ are drawn i.i.d.\ from $P_{X_D}$.
\end{itemize}
Then for any $\delta>0$, with probability at least $1-\delta$, the expected risk $L(W)$ and the empirical risk $\hat{L}(W)$ satisfy
\[
  L(W)
  \;\;\le\;\;
  \hat{L}(W)
  \;+\;
  \frac{4\,B\,C\;\bigl(D + B\,C\bigr)}{\sqrt{N}}
  \;+\;
  3\;\bigl(D + B\,C\bigr)^2 \,\sqrt{\frac{\log\!\bigl(2/\delta\bigr)}{2\,N}}.
\]
\end{theorem}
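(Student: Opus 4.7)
The plan is to follow the standard Rademacher complexity–plus–concentration recipe, instantiated for the squared-norm KD loss, and to harvest the two lemmas (Talagrand contraction, linear-class Rademacher bound) that are already proved in the excerpt.

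First I would fix notation. Define the per-sample loss $\ell(W,x) = \|f_T(x) - W\phi(W^Kx)\|^2$ and the induced loss class $\mathcal L = \{\ell(W,\cdot) : \|W\|_F \le B\}$. By the triangle inequality and the stated bounds, every $f_S$-output satisfies $\|W\phi(W^Kx)\| \le BC$, hence each $\ell \in \mathcal L$ is uniformly bounded by $(D+BC)^2$. I would then introduce the intermediate "residual" class $\mathcal H = \{h(W,\cdot) = f_T(\cdot) - W\phi(W^K\cdot) : \|W\|_F \le B\}$ so that $\mathcal L = \psi \circ \mathcal H$ with $\psi(z) = \|z\|^2$.

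Second I would bound $\mathcal R_N(\mathcal L)$ in two steps. For any $z_1, z_2$ in the range of $\mathcal H$, the identity $|\|z_1\|^2 - \|z_2\|^2| \le (\|z_1\|+\|z_2\|)\|z_1-z_2\|$ gives a global Lipschitz constant $L \le 2(D+BC)$ on that range, so Lemma~\ref{lemma:talagrand} yields $\mathcal R_N(\mathcal L) \le 2(D+BC)\,\mathcal R_N(\mathcal H)$. Since $\mathcal H$ differs from the linear class $\{W\phi(W^K\cdot) : \|W\|_F \le B\}$ only by a $W$-independent shift $f_T(\cdot)$, which cancels against the zero-mean Rademacher variables inside the supremum, Lemma~\ref{lemma:rademacher-linear} gives $\mathcal R_N(\mathcal H) \le BC/\sqrt N$. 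Chaining the two produces $\mathcal R_N(\mathcal L) \le 2BC(D+BC)/\sqrt N$.

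Third I would apply McDiarmid to the one-sided uniform deviation $\Phi(X_D) = \sup_{\|W\|_F\le B}\bigl[L(W) - \hat L(W)\bigr]$. Because each $\ell$ is bounded by $(D+BC)^2$, replacing a single $x_i$ shifts $\Phi$ by at most $2(D+BC)^2/N$, so McDiarmid's inequality gives a sub-Gaussian tail. Combined with the standard symmetrization $\mathbb E[\Phi(X_D)] \le 2\mathcal R_N(\mathcal L)$ and the choice $t = 3(D+BC)^2\sqrt{\log(2/\delta)/(2N)}$, inverting the tail produces exactly the claimed bound $\hat L(W) + 4BC(D+BC)/\sqrt N + 3(D+BC)^2\sqrt{\log(2/\delta)/(2N)}$.

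The main technical subtlety, which I would expect to be the only genuinely fiddly step, is that Lemma~\ref{lemma:talagrand} is stated for \emph{real-valued} Lipschitz maps but here $\psi(z) = \|z\|^2$ has vector input. I would handle this either by invoking a vector-valued contraction principle (e.g.\ Maurer's contraction inequality for Hilbert-space-valued classes, which costs only a harmless constant factor) or by coordinate-wise decomposition $\|z\|^2 = \sum_k z_k^2$ with per-coordinate Lipschitz bound $2(D+BC)$ on the bounded range; either route preserves the $2(D+BC)$ scaling and therefore the stated constants. The rest of the argument is routine bookkeeping: tracking the multiplicative constant $4 = 2\cdot 2$ (one factor from symmetrization, one from Talagrand) and ensuring the domain restriction $\|W\|_F \le B$ is respected throughout so that the uniform bound $(D+BC)^2$ remains valid inside McDiarmid.
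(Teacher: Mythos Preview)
Your proposal is correct and follows essentially the same route as the paper: define the residual class $\mathcal H$, compose with $\psi(z)=\|z\|^2$, apply Talagrand contraction with Lipschitz constant $2(D+BC)$, invoke the linear Rademacher bound $BC/\sqrt N$ (noting the $W$-independent shift cancels), then combine symmetrization with McDiarmid using the $2(D+BC)^2/N$ bounded-difference constant. You are in fact more careful than the paper in flagging the vector-input issue for Lemma~\ref{lemma:talagrand}; the paper's proof simply applies the scalar contraction lemma as stated without addressing this point.
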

\begin{proof}
We divide the proof into five precise steps and keep every numerical
constant explicit.

\paragraph{Step 1: Define loss classes.}
For any weight matrix \(W\) and input \(x\), set
\[
  h(W,x) \;=\; f_T(x) \;-\; W\,\phi\!\bigl(W^K x\bigr), 
  \qquad
  \ell(W,x) \;=\; \bigl\|h(W,x)\bigr\|^{2}.
\]
Define
\[
  \mathcal{H} \;=\; \bigl\{\,h(W,\cdot)\,\bigm|\,\|W\|_{F}\le B\bigr\},
  \qquad
  \psi(z)=\|z\|^{2},
  \qquad
  \mathcal{L} = \psi \circ \mathcal{H}.
\]

Because \(\|f_T(x)\|\le D\) and
\(\bigl\|W\phi(W^Kx)\bigr\|\le \|W\|_{F}\,\|\phi(W^Kx)\|\le BC\),
we have
\[
  \|h(W,x)\|\;\le\;R:=D+BC,
  \qquad
  \ell(W,x)\;\le\;R^{2}.
\]

\paragraph{Step 2: \(\psi\) is Lipschitz on the ball \(\|z\|\le R\).}
For any \(z_{1},z_{2}\in\mathbb{R}^{d}\),
\[
  \bigl|\psi(z_{1})-\psi(z_{2})\bigr|
  \;=\;
  \bigl|\|z_{1}\|^{2}-\|z_{2}\|^{2}\bigr|
  \;=\;
  \bigl(\|z_{1}\|+\|z_{2}\|\bigr)\,
  \bigl|\|z_{1}\|-\|z_{2}\|\bigr|.
\]
By the reverse triangle inequality
\(\bigl|\|z_{1}\|-\|z_{2}\|\bigr|\le\|z_{1}-z_{2}\|\),
hence for \(\|z_{1}\|,\|z_{2}\|\le R\)
\[
  \bigl|\psi(z_{1})-\psi(z_{2})\bigr|
  \;\le\;
  2 R\,\|z_{1}-z_{2}\|.
\]
Thus \(\psi\) is \(L\)-Lipschitz on that region with
\(L = 2R = 2(D+BC)\).

\paragraph{Step 3: Rademacher complexity of \(\mathcal{L}\).}

\textit{3a.  Contraction.}

Talagrand’s contraction lemma (one‑sided) gives

\[
  \mathcal{R}_{N}(\mathcal{L})
  \;=\;
  \mathcal{R}_{N}(\psi\!\circ\!\mathcal{H})
  \;\le\;
  L \,\mathcal{R}_{N}(\mathcal{H})
  \;=\;
  2(D+BC)\,\mathcal{R}_{N}(\mathcal{H}).
\]

\textit{3b.  Bound \(\mathcal{R}_{N}(\mathcal{H})\).}

Let \(\sigma_{1},\dots,\sigma_{N}\) be Rademacher variables and
\(v_S = \sum_{i=1}^{N}\sigma_{i}\,\phi(W^K x_{i})\).
Then
\[
  \widehat{\mathcal{R}}_{S}(\mathcal{H})
  \;=\;
  \mathbb{E}_{\sigma}\!
  \Bigl[
    \sup_{\|W\|_{F}\le B}
    \frac{1}{N}\,
    \langle W,\;v_S\rangle_{F}
  \Bigr]
  \;=\;
  \frac{B}{N}\,\mathbb{E}_{\sigma}\!\bigl[\|v_S\|\bigr].
\]
Jensen’s inequality yields
\(
  \mathbb{E}_{\sigma}\|v_S\|
  \le
  \bigl(\mathbb{E}_{\sigma}\|v_S\|^{2}\bigr)^{1/2}.
\)
Since \(\mathbb{E}_{\sigma}[\sigma_i\sigma_j]=\delta_{ij}\),
\[
  \mathbb{E}_{\sigma}\|v_S\|^{2}
  \;=\;
  \sum_{i=1}^{N}\bigl\|\phi(W^K x_{i})\bigr\|^{2}
  \;\le\; N\,C^{2}.
\]
Therefore
\(
  \mathcal{R}_{N}(\mathcal{H})
  \le
  \dfrac{B\,C}{\sqrt{N}}.
\)

\textit{3c.  Combine.}

\[
  \mathcal{R}_{N}(\mathcal{L})
  \;\le\;
  \dfrac{2\,B\,C\,(D+BC)}{\sqrt{N}}.
\]

\paragraph{Step 4: Expected supremum gap.}
Set
\(
  \Phi(X_D)=\sup_{\|W\|_{F}\le B}\!\bigl(L(W)-\hat{L}(W)\bigr).
\)
Symmetrization implies
\[
  \mathbb{E}\!\bigl[\Phi(X_D)\bigr]
  \;\le\;
  2\,\mathcal{R}_{N}(\mathcal{L})
  \;\le\;
  \dfrac{4\,B\,C\,(D+BC)}{\sqrt{N}}.
\]

\paragraph{Step 5: Apply McDiarmid’s inequality.}

Replacing one sample \(x_{i}\) by an independent \(x_{i}'\)
changes \(\hat{L}(W)\) by at most \((D+BC)^{2}/N\).
Hence the function \(\Phi\) changes by at most
\(c = 2(D+BC)^{2}/N\).
Because \(\sum_{i=1}^{N} c^{\,2} = 4(D+BC)^{4}/N\),
McDiarmid’s inequality yields for any \(t>0\)
\[
  \Pr\!\bigl[\Phi(X_D)-\mathbb{E}\Phi(X_D)\ge t\bigr]
  \;\le\;
  \exp\!\Bigl(-\,\tfrac{N\,t^{2}}{2(D+BC)^{4}}\Bigr).
\]
Choose
\[
  t
  \;=\;
  3\,(D+BC)^{2}\,
  \sqrt{\dfrac{\log\!\bigl(2/\delta\bigr)}{2N}},
\]
so the right‑hand side equals \(\delta/2\).
With probability at least \(1-\delta\),
\[
  \Phi(X_D)
  \;\le\;
  \mathbb{E}\Phi(X_D) + t
  \;\le\;
  \frac{4\,B\,C\,(D+BC)}{\sqrt{N}}
  \;+\;
  3\,(D+BC)^{2}\,
  \sqrt{\frac{\log\!\bigl(2/\delta\bigr)}{2N}}.
\]

\paragraph{Step 6: Uniform bound for every \(W\).}
Since \(\Phi(X_D)\) dominates
\(L(W)-\hat{L}(W)\) for all \(\|W\|_{F}\le B\),
the displayed inequality yields, simultaneously for all such \(W\),
\[
  L(W)
  \;\le\;
  \hat{L}(W)
  \;+\;
  \frac{4\,B\,C\,(D+BC)}{\sqrt{N}}
  \;+\;
  3\,(D+BC)^{2}\,
  \sqrt{\frac{\log\!\bigl(2/\delta\bigr)}{2N}},
\]
with probability at least \(1-\delta\).
\end{proof}

\section{Complete Proof of Theorem~\ref{thm:offset_boundary_of_initial_weights}}
\label{app:thm:offset_boundary_of_initial_weights}
We give Theorem~\ref{thm:offset_boundary_of_initial_weights} without whitening here.

\begin{theorem}[Offset Boundary of the Initial Weights in the ICL-KD Process]
Suppose the following conditions hold:
\label{thm:app:offset_boundary_of_initial_weights}
\begin{itemize}
    \item Samples in target domain $\mathcal{D}$ satisfy $\|x\| \leq M_x$, 
        and the feature map $\phi(\cdot): \mathbb{R}^d \to \mathbb{R}^r$ is bounded: 
        $\|\phi(W^K x)\| \leq M_\phi$, where key transformation weights 
        $W^K \in \mathbb{R}^{k \times d}$ are fixed (frozen post-pretraining).
    \item Value transformation weights $W^V \in \mathbb{R}^{m \times d}$ satisfy 
        the Frobenius norm constraint $\|W^V\|_F \leq M_V$.
    \item The second-moment matrix 
        $\Sigma_\phi := \mathbb{E}_{x \sim \mathcal{D}}[\phi(W^K x)\phi(W^K x)^\top]$ 
        is invertible.
    \item Demonstration tokens $X_D=\{x_i\}_{i=1}^N$ are i.i.d.\ samples from some distribution $Q$.
    \item Define the one‐step distilled weight $\frac{\eta}{N}W^V X_D \phi(W^K x_D)^\top$ with learning rate $\eta>0$ (similar with $W_0$ in Eq.~\eqref{eq:W^*_and_W_0}).
    \item The maximum mean discrepancy between $\mathcal D$ and $Q$ is
        \[
          \mathrm{MMD}(\mathcal D,Q)
          :=\sup_{\|f\|_{\mathcal H}\le1}
            \Bigl|\mathbb E_{x\sim\mathcal D}[f(x)]
                 -\mathbb E_{x\sim Q}[f(x)]\Bigr|,
        \]
        where $\mathcal H$ is the RKHS induced by the kernel
                $\langle x\,\phi(W^Kx)^\top,\;x'\,\phi(W^Kx')^\top\rangle$.
\end{itemize}
Let the optimal reference weight be
\begin{equation}
  W^\star
  =\arg\min_W\;\mathbb E_{x\sim\mathcal D}\bigl\|W\phi(W^Kx)-W^Vx\bigr\|_2^2
  =\Sigma_\phi^{-1}\,
   \mathbb E_{x\sim\mathcal D}\bigl[W^Vx\,\phi(W^Kx)^\top\bigr].
\end{equation}
Then the expectation of the distilled weight and its deviation from $W^\star$ satisfy
\[
  \mathbb E[W_0]
  =\eta\,W^V\,\mathbb E_{x\sim Q}\bigl[x\,\phi(W^Kx)^\top\bigr],
\]
and, defining
\[
  \Delta W
  :=\mathbb E[W_0]-W^\star,
\]
we have the bound
\[
  \|\Delta W\|_{F}
  \;\le\;
  M_{V}M_{x}M_{\phi}\,
  \Bigl(
        \|\,\eta I-\Sigma_{\phi}^{-1}\|_{2}
        +\|\Sigma_{\phi}^{-1}\|_{2}\,
         \mathrm{MMD}\!\bigl(\mathcal D,Q\bigr)
  \Bigr).
\]
\end{theorem}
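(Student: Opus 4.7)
The plan is to bound $\Delta W = \mathbb{E}[W_0] - W^\star$ by a clean \emph{add--and--subtract} decomposition that isolates two conceptually distinct sources of error: (i) a mismatch between the implicit ``learning rate'' $\eta$ used by softmax attention and the inverse second--moment $\Sigma_\phi^{-1}$ that would appear in the true normal--equation solution, and (ii) the prompt--target distributional mismatch captured by MMD. After the decomposition, each piece reduces to a routine submultiplicativity bound once the correct matrix norms are chosen.

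First I would write out the two quantities using the hypotheses: by linearity and the i.i.d.\ assumption,
\[
  \mathbb{E}[W_0] \;=\; \eta\,W^V\,\mathbb{E}_{x\sim Q}\!\bigl[x\,\phi(W^Kx)^{\!\top}\bigr] \;=\; \eta\,W^V A_Q,
\]
and, from the stationarity condition of $\mathcal{J}(W)=\mathbb{E}_{\mathcal D}\|W\phi(W^Kx)-W^Vx\|^2$ (exactly as in Step~2 of the main--text sketch), one obtains $W^\star = W^V A_{\mathcal D}\,\Sigma_\phi^{-1}$, where I abbreviate $A_P := \mathbb{E}_{x\sim P}[x\,\phi(W^Kx)^\top]$ for $P\in\{Q,\mathcal D\}$.

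Next I would perform the key algebraic step: insert the cross term $W^V A_Q \Sigma_\phi^{-1}$ to split
\[
  \Delta W \;=\; W^V A_Q\bigl(\eta I-\Sigma_\phi^{-1}\bigr) \;+\; W^V\bigl(A_Q-A_{\mathcal D}\bigr)\Sigma_\phi^{-1}.
\]
This is the heart of the proof: the first summand only sees the (possibly mismatched) scalar learning rate $\eta$ against $\Sigma_\phi^{-1}$, while the second summand only sees the difference of cross moments, which is what MMD controls. Applying the triangle inequality in Frobenius norm followed by the mixed submultiplicative bound $\|XY\|_F\le\|X\|_F\|Y\|_2$ yields
\[
  \|\Delta W\|_F \;\le\; \|W^V\|_F\,\|A_Q\|_F\,\|\eta I-\Sigma_\phi^{-1}\|_2 \;+\; \|W^V\|_F\,\|A_Q-A_{\mathcal D}\|_F\,\|\Sigma_\phi^{-1}\|_2.
\]

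It remains to control $\|A_Q\|_F$ and $\|A_Q-A_{\mathcal D}\|_F$. For the former, Jensen's inequality plus the pointwise bound $\|x\phi(W^Kx)^\top\|_F \le \|x\|\,\|\phi(W^Kx)\|\le M_x M_\phi$ gives $\|A_Q\|_F\le M_xM_\phi$. For the latter, I would vectorize the matrix--valued feature $A(x):=\mathrm{vec}(x\,\phi(W^Kx)^\top)$; because this is exactly the canonical feature of the RKHS $\mathcal H$ defined in the hypothesis and $\|A(x)\|_2\le M_xM_\phi$, the definition of MMD immediately yields $\|A_Q-A_{\mathcal D}\|_F=\|\mathbb E_Q A-\mathbb E_{\mathcal D}A\|_2\le M_xM_\phi\,\mathrm{MMD}(\mathcal D,Q)$. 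Combining these with $\|W^V\|_F\le M_V$ produces exactly the stated bound.

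The main obstacle I anticipate is choosing the right cross term so that the two summands align cleanly with the two factors $\|\eta I-\Sigma_\phi^{-1}\|_2$ and $\|\Sigma_\phi^{-1}\|_2\,\mathrm{MMD}$ appearing in the target inequality, and, relatedly, picking the correct norm (Frobenius versus spectral) at each submultiplicativity step so that the pointwise bounds $M_x, M_\phi$ survive intact under the expectation; the other symmetric choice (adding and subtracting $\eta W^V A_{\mathcal D}$ instead) would couple the two effects and fail to yield the factored form. Everything beyond that decomposition is routine normed--algebra manipulation and an appeal to the MMD definition in the stated RKHS.
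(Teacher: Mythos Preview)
Your proposal is correct and follows essentially the same route as the paper's appendix proof: compute $\mathbb E[W_0]$ and $W^\star$ explicitly, add and subtract the cross term $W^V A_Q\,\Sigma_\phi^{-1}$ to split $\Delta W$ into a ``learning-rate mismatch'' piece and a ``distribution mismatch'' piece, then bound each with submultiplicativity and the MMD characterisation in the stated RKHS. Your placement of $\Sigma_\phi^{-1}$ on the \emph{right} (so that $W^\star = W^V A_{\mathcal D}\,\Sigma_\phi^{-1}$) is in fact the dimensionally consistent version of the normal-equation solution; the paper writes $\Sigma_\phi^{-1}$ on the left, but the decomposition and the resulting bound are otherwise identical. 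One small correction to your closing remark: the alternative cross term $\eta\,W^V A_{\mathcal D}$ does not ``couple the two effects''---it still factors cleanly, but yields $\eta$ rather than $\|\Sigma_\phi^{-1}\|_2$ in front of the MMD term, which simply fails to match the constants in the stated bound.
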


\begin{proof}[Proof.]
\noindent
\emph{Step 1: Expectation of the one‑step weight.}\;

Because $x_i\stackrel{\mathrm{i.i.d.}}{\sim}Q$, linearity of expectation gives  
\[
  \mathbb E[W_0]
  =\eta\,W^{V}\,
   \mathbb E_{x_i\sim Q}\!\Bigl[\tfrac1N\!\sum_{i=1}^N
        x_i\,\phi(W^{K}x_i)^{\!\top}\Bigr]
  =\eta\,W^{V}\,M_{Q},
\]
where $M_{Q}:=\mathbb E_{x\sim Q}[x\,\phi(W^{K}x)^{\!\top}].$

\noindent
\emph{Step 2: Optimal weight via normal equations.}\;

For  
\(
  \mathcal J(W)=\mathbb E_{\mathcal D}\|W\phi(W^{K}x)-W^{V}x\|_2^{2},
\)
setting $\nabla_W\mathcal J(W)=0$ yields  
\[
  W^{\star}\Sigma_{\phi}
  =W^{V}M_{D},
\]
where $\Sigma_{\phi}:=\mathbb E_{\mathcal D}[\phi\phi^{\!\top}]$ and $M_{D}:=\mathbb E_{x\sim\mathcal D}[x\,\phi(W^{K}x)^{\!\top}],$ so
\[
  W^{\star}
  =\Sigma_{\phi}^{-1}\,W^{V}M_{D}.
\]

\noindent
\emph{Step 3: Deviation bound without whitening.}

Define the offset  
\[
  \Delta W:=\mathbb E[W_{0}]-W^{\star}
           =\eta\,W^{V}M_{Q}-\Sigma_{\phi}^{-1}W^{V}M_{D}.
\]
Factorise as
\[
  \Delta W
  =(\eta I-\Sigma_{\phi}^{-1})\,W^{V}M_{Q}
   +\Sigma_{\phi}^{-1}W^{V}(M_{Q}-M_{D}).
\]

\emph{(i) Bounding the first term.}\;
Since $\|M_{Q}\|_{F}\le M_{x}M_{\phi}$,
\[
  \|(\eta I-\Sigma_{\phi}^{-1})\,W^{V}M_{Q}\|_{F}
  \le\|\,\eta I-\Sigma_{\phi}^{-1}\|_{2}\,
       \|W^{V}\|_{F}\,
       \|M_{Q}\|_{F}
  \le\|\,\eta I-\Sigma_{\phi}^{-1}\|_{2}\,
       M_{V}M_{x}M_{\phi}.
\]

\emph{(ii) Bounding the second term via MMD.}\;
Vectorise $A(x):=\mathrm{vec}\!\bigl(x\,\phi(W^{K}x)^{\!\top}\bigr)$.
Then  
\(
  \|M_{Q}-M_{D}\|_{F}
  =\|\mathbb E_{Q}A-\mathbb E_{\mathcal D}A\|_{2}.
\)
In the linear‑kernel RKHS  
$k(x,x')=\langle A(x),A(x')\rangle$,  
\[
  \|\mathbb E_{Q}A-\mathbb E_{\mathcal D}A\|_{2}
  \le M_{x}M_{\phi}\,
       \mathrm{MMD}\!\bigl(\mathcal D,Q\bigr).
\]
Hence
\[
  \|\Sigma_{\phi}^{-1}W^{V}(M_{Q}-M_{D})\|_{F}
  \le\|\Sigma_{\phi}^{-1}\|_{2}\,
       M_{V}M_{x}M_{\phi}\,
       \mathrm{MMD}\!\bigl(\mathcal D,Q\bigr).
\]

\medskip
\noindent
\emph{(iii) Combining.}\;
\[
  \boxed{
  \|\Delta W\|_{F}
  \;\le\;
  M_{V}M_{x}M_{\phi}\,
  \Bigl(
        \|\,\eta I-\Sigma_{\phi}^{-1}\|_{2}
        +\|\Sigma_{\phi}^{-1}\|_{2}\,
         \mathrm{MMD}\!\bigl(\mathcal D,Q\bigr)
  \Bigr).}
\]

Thus, we finished the proof.
\end{proof}

\section{Complete Proof of Theorem~\ref{cor:prompt_shift_risk_gap}}
\label{app:cor:prompt_shift_risk_gap}
Recall the Theorem~\ref{cor:prompt_shift_risk_gap}.

\begin{corollary}[Prompt--Shift Risk Gap]
Assume the conditions of Theorem~\ref{thm:offset_boundary_of_initial_weights}.
Let $Q_g$ (\emph{good prompt}) and $Q_b$ (\emph{bad prompt}) satisfy
\[
  \mathrm{MMD}_g := \mathrm{MMD}(\mathcal D,Q_g)
  <  \mathrm{MMD}_b := \mathrm{MMD}(\mathcal D,Q_b).
\]
Denote by $W_g,W_b$ the one-step distilled weights obtained from $Q_g,Q_b$ respectively, and define the KD risk on the target domain $\mathcal{D}$ as
\[
  \mathcal{L}_{KD}^{\mathcal D}(W)
  := \mathbb{E}_{x\sim\mathcal D}\!
     \bigl\|\,W\phi(W^Kx) - W^Vx \,\bigr\|_2^2 .
\]
Then
\begin{equation}
  \mathcal{L}_{KD}^{\mathcal D}(W_b) - \mathcal{L}_{KD}^{\mathcal D}(W_g)
  \;\le\;
  4\eta^2 M_T^2 M_\phi^4\bigl(\mathrm{MMD}_b-\mathrm{MMD}_g\bigr)^2  \;+\;
  4\eta M_T^2 M_\phi^2\bigl(1+2\eta M^2_\phi)(\mathrm{MMD}_b-\mathrm{MMD}_g)
\end{equation}
\end{corollary}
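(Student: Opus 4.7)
The plan is to reduce the risk gap to a bound on $\Delta:=\|W_b-W_g\|_F$ and then pipe in the MMD estimate from Theorem~\ref{thm:offset_boundary_of_initial_weights}. First I would expand the squared loss using the identity $\|a-b\|^2-\|a-c\|^2=\|c-b\|^2+2\langle a-c,\,c-b\rangle$ with $a=f_T(x)$, $b=W_b\phi(W^Kx)$, $c=W_g\phi(W^Kx)$, producing
\[
  \mathcal{L}_{KD}^{\mathcal D}(W_b)-\mathcal{L}_{KD}^{\mathcal D}(W_g)
  =\mathbb{E}_{x\sim\mathcal D}\Bigl[\|(W_b-W_g)\phi\|^2-2\bigl\langle f_T-W_g\phi,\,(W_b-W_g)\phi\bigr\rangle\Bigr].
\]
This cleanly separates a quadratic ``curvature'' term from a linear ``cross'' term, which is the decomposition used in the sketch.

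Next, I would bound each piece using only the boundedness hypotheses. The quadratic term is at most $\Delta^2 M_\phi^2$ since $\|(W_b-W_g)\phi\|\le\Delta\,\|\phi\|\le\Delta M_\phi$. For the cross term I need to control $\|f_T-W_g\phi\|$, and here the key fact is the explicit formula $W_g=2\eta\,\mathbb{E}_{Q_g}[f_T(x)\phi(W^Kx)^\top]$: together with $\|f_T(x)\|\le M_T$ and $\|\phi\|\le M_\phi$ this yields $\|W_g\|_F\le 2\eta M_T M_\phi^2$, hence $\|f_T-W_g\phi\|\le M_T(1+2\eta M_\phi^2)$. Cauchy--Schwarz then upper-bounds the cross term by $2\Delta M_\phi M_T(1+2\eta M_\phi^2)$, so that in total
\[
  \mathcal{L}_{KD}^{\mathcal D}(W_b)-\mathcal{L}_{KD}^{\mathcal D}(W_g)
  \le \Delta^2 M_\phi^2+4\eta M_T^2 M_\phi^2\bigl(1+2\eta M_\phi^2\bigr)\,\tfrac{\Delta}{2\eta M_T M_\phi}.
\]
After this, the whole problem is to bound $\Delta$ by a multiple of $\mathrm{MMD}_b-\mathrm{MMD}_g$.

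Finally, writing $W_b-W_g=2\eta(A_b-A_g)$ with $A_\bullet:=\mathbb{E}_{Q_\bullet}[f_T(x)\phi(W^Kx)^\top]$, I would repeat the RKHS step inside Theorem~\ref{thm:offset_boundary_of_initial_weights}: vectorise $A(x)=\operatorname{vec}(f_T(x)\phi(W^Kx)^\top)$, note $\|A(x)\|\le M_T M_\phi$, and interpret $\|A_b-A_g\|_F$ as an RKHS mean-embedding distance. Combined with $\Delta=2\eta\|A_b-A_g\|_F\le 2\eta M_T M_\phi(\mathrm{MMD}_b-\mathrm{MMD}_g)$, substitution into the two bounds above produces exactly the stated inequality. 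The main obstacle I expect is in this last substitution step: Theorem~\ref{thm:offset_boundary_of_initial_weights} was phrased as a deviation from $\mathcal D$, not between two off-target distributions, so I must argue directly (or via the triangle inequality applied in the RKHS containing the kernel mean embeddings $\mu_{Q_g},\mu_{Q_b},\mu_{\mathcal D}$) that $\|A_b-A_g\|_F$ is controlled by $\mathrm{MMD}_b-\mathrm{MMD}_g$ rather than by $\mathrm{MMD}_b+\mathrm{MMD}_g$; being careful here is what prevents a constant-factor loss and is the only place where the bookkeeping is genuinely delicate. The remainder is routine algebra, and the simplification $2\eta M_\phi^2<1$ mentioned in the sketch is only invoked to collapse the result into the cleaner first/second-order-in-MMD form written in the statement.
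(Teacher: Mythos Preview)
Your approach matches the paper's proof essentially line for line: the same quadratic-plus-cross risk decomposition, the same bounds on each piece via $\|W_g\|_F\le 2\eta M_T M_\phi$ and $\|f_T-W_g\phi\|\le M_T(1+2\eta M_\phi^2)$, and the same final substitution $\Delta\le 2\eta M_T M_\phi(\mathrm{MMD}_b-\mathrm{MMD}_g)$. The step you flag as delicate---controlling $\|A_b-A_g\|_F$ by the \emph{difference} rather than the sum of the two MMDs---is also simply asserted in the paper's appendix as a consequence of Theorem~\ref{thm:offset_boundary_of_initial_weights} without a separate triangle-inequality argument, so you are not missing anything relative to the reference proof.
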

\begin{proof}

Recall the one-step KD rule
\[
   W(Q)=2\eta\,\mathbb{E}_{x\sim Q}
          \!\bigl[f_T(x)\,\phi(x)^{\top}\bigr].
\]
Define
\[
   A_g:=\mathbb{E}_{Q_g}[f_T(x)\phi(x)^{\top}],\qquad
   A_b:=\mathbb{E}_{Q_b}[f_T(x)\phi(x)^{\top}],
\]
so that
\[
   W_b-W_g=2\eta\,(A_b-A_g),\qquad
   \Delta_W:=\|W_b-W_g\|_F=2\eta\|A_b-A_g\|_F .
\]

\noindent
\emph{Step 1: Risk Decomposition.}
Let
\(u(x)=(W_b-W_g)\phi(x)\)
and
\(v(x)=f_T(x)-W_g\phi(x)\).
Then
\[
  \mathcal{L}_{KD}^{\mathcal D}(W_b)-\mathcal{L}_{KD}^{\mathcal D}(W_g)
  \;=\;
  \mathbb{E}_{x\sim\mathcal D}
  \bigl[\|u(x)\|^{2}-2\,v(x)^{\top}u(x)\bigr].
\]

\noindent
\emph{Step 2: bounding the two terms (exact constants).}

\begin{enumerate}
\item \(\|u(x)\|
       \le\|W_b-W_g\|_F\|\phi(x)\|
       \le\Delta_W M_\phi
       \;\Longrightarrow\;
       \|u(x)\|^{2}\le\Delta_W^{2}M_\phi^{2}.\)

\item
\(
  \|v(x)\|
  \le\|f_T(x)\|+\|W_g\|_F\|\phi(x)\|
  \le M_T + (2\eta M_T M_\phi)M_\phi
  =  M_T\bigl(1+2\eta M_\phi^{2}\bigr).
\)
Hence
\(
  |v^\top u|
  \le \|v(x)\|\|u(x)\|
  \le M_T M_\phi \bigl(1+2\eta M_\phi^{2}\bigr)\Delta_W .
\)
\end{enumerate}

Combining (i) and (ii) gives
\begin{equation}\label{eq:exact-intermediate}
  \mathcal{L}_{KD}^{\mathcal D}(W_b)-\mathcal{L}_{KD}^{\mathcal D}(W_g)
  \;\le\;
  \Delta_W^{2}M_\phi^{2}
  + 2M_T M_\phi\bigl(1+2\eta M_\phi^{2}\bigr)\Delta_W .
\end{equation}

\emph{Step 3: substitute \(\Delta_W\) via MMD.}

Theorem \ref{thm:offset_boundary_of_initial_weights} yields
\(
  \|A_b-A_g\|_F
  \le M_T M_\phi\bigl(\mathrm{MMD}_b-\mathrm{MMD}_g\bigr)
  =: M_T M_\phi\,\delta .
\)
Therefore
\(
  \Delta_W
  \le 2\eta M_T M_\phi\,\delta .
\)

Insert this into \eqref{eq:exact-intermediate}:
\[
\begin{aligned}
  \mathcal{L}_{KD}^{\mathcal D}(W_b)-\mathcal{L}_{KD}^{\mathcal D}(W_g)
  &\le 4\eta^{2}M_T^{2}M_\phi^{4}\,\delta^{2}
       +4\eta M_T^2 M_\phi^{2}
        \bigl(1+2\eta M_\phi^{2}\bigr)\delta \\
\end{aligned}
\]

\noindent
\textbf{Remark.}  
This exact bound differs from the main-text presentation only by explicit constants; it leaves all qualitative conclusions in the paper unchanged.
\end{proof}


\end{document}